\newtheorem{theorem}{Theorem}
\newtheorem{lemma}{Lemma}
\newtheorem{proof}{Proof}[section]
\newtheorem{assumption}{Assumption}
\newtheorem{definition}{Definition}
\newtheorem{remark}{Remark}
\begin{document}
%
\title{Convergence Analysis and System Design for Federated Learning over Wireless Networks}
%
%
%

\author{Shuo~Wan$^{1,2}$,
        Jiaxun~Lu$^{2}$,
        Pingyi~Fan{*}$^{1}$,~\IEEEmembership{Senior~member,~IEEE,}
        Yunfeng~Shao$^{2}$,
        Chenghui~Peng$^{3}$,
        and~Khaled~B.~letaief$^{4}$,~\IEEEmembership{Fellow,~IEEE}\\
        \small
        $^1$Department of Electronic Engineering, Tsinghua University, Beijing, P.R. China\\
        $^2$Huawei Noah’s Ark Lab\\
        $^3$Huawei Wireless Technology Lab\\
        $^4$Department of Electronic Engineering, Hong Kong University of Science and Technology, Hong Kong\\
        E-mail:
        wan-s17@mails.tsinghua.edu.cn,
        *fpy@mail.tsinghua.edu.cn,
        \{lujiaxun, shaoyunfeng, pengchenghui\}@huawei.com,
        eekhaled@ece.ust.hk}



\maketitle

\begin{abstract}
Federated learning (FL) has recently emerged as an important and promising learning scheme in IoT, enabling devices to jointly learn a model without sharing their raw data sets. However, as the training data in FL is not collected and stored centrally, FL training requires frequent model exchange, which is largely affected by the wireless communication network. Therein, limited bandwidth and random package loss restrict interactions in training. Meanwhile, the insufficient message synchronization among distributed clients could also affect FL convergence. In this paper, we analyze the convergence rate of FL training considering the joint impact of communication network and training settings. Further by considering the training costs in terms of time and power, the optimal scheduling problems for communication networks are formulated. The developed theoretical results can be used to assist the system parameter selections and explain the principle of how the wireless communication system could influence the distributed training process and network scheduling.
\end{abstract}

\begin{IEEEkeywords}
Federated learning, Network scheduling, Coupling design, Convergence analysis, Edge computing
\end{IEEEkeywords}

%
\IEEEpeerreviewmaketitle

\section{Introduction}
%
%
%
%
\IEEEPARstart{W}{ith} the emergence of data sets and the rapid growth of distributed computing, distributed learning has become a promising mode for deployment of large-scale machine learning \cite{bottou2018optimization}. Under such circumstances, due to data privacy and limited communication resources, it is very difficult to transmit all the raw data sets to a central server for learning, especially for applications with widely distributed intelligent clients. Thus, to implement machine learning with high efficiency, researchers started to focus on distributed learning schemes. In \cite{mcmahan2017communication}, the authors proposed the federated learning (FL) scheme for distributed data sets. Such a technique allows applications to collectively reap the benefits of shared models trained from the rich data while avoiding the need of central data collection.

However, implementation of FL could face constraints from insufficient communication network support, especially in a highly distributed system. Unlike conventional training with all resources on the cloud, FL training does not have direct access to all raw distributed data. Thus, it requires periodic exchange of model data over a wireless network among clients. Without sufficient message exchange, FL training could face severe degradation compared with centralized training. Therefore, in FL scheme, network scheduling would largely affect the convergence of training process.

To support FL with high efficiency, the network design must be linked with FL training to consider their coupling properties. Conventional network design mainly considers the communication efficiency without considering the computation algorithms. However, since convergence of FL can be largely affected by message exchange, how to support such a training algorithm with limited network resources would rise up as an important problem. Under such circumstances, the performance gain of FL from network scheduling would build a bridge between communication and computation. By proper analysis of such a bridge, a coupled network design can be achieved.


The problem of wireless network scheduling for FL training takes two steps to be solved. Firstly, a theoretical analysis of its convergence is required. Secondly, based on the convergence analysis, a network model should be set up for FL implementation and the settings therein will be optimized accordingly. Although convergence analysis for distributed learning has been widely studied, the analysis considering the impact of network settings has not been investigated with general and precise results. To the best of our knowledge, former works mainly consider settings of computation, such as the accuracy threshold, batch size, etc. However, as discussed above, communication will affect the model aggregation, which leads to a degraded convergence. Therefore, to support FL in network settings, the convergence analysis considering network support is an essential bridge. Then by jointly considering its tradeoff with costs in the communication network by model analysis, the optimal settings will be derived. Meanwhile, FL implementation mainly depends on the cooperation among widely distributed clients. Degradation in some participating clients also leads to a worse performance of the whole training process. Under such circumstances, the resource scheduling among clients is also required.

\subsection{Related works}
Recently some works studied the convergence of FL training with several bounds \cite{li2019convergence,haddadpour2019convergence,sahu2018convergence,wang2019adaptive,liu2020client}. Authors of \cite{haddadpour2019convergence} analyzed the general converging speed of FL. Considering the impact of training parameters, \cite{li2019convergence} proposed an analytic bound. However, the background communication was not considered systematically and the bound still requires refinement with more explicit physical meanings. The work in \cite{sahu2018convergence} tried to improve the classical FL scheme based on theoretical analysis. Meanwhile, \cite{wang2019adaptive} analyzed FL convergence considering the tradeoff between the local epoch and global epoch. Based on this work, \cite{liu2020client} further proposed a three-layer training scheme and discussed related control algorithms. However, in the proposed bounds, the tradeoff between training and communication is based on an additionally defined resource budget, which restricts the application range.

From the point of FL implementation, some works discussed the FL scheme and the scheduling algorithm therein \cite{wang2019adaptive,bhagoji2019analyzing,dinh2020federated,chen2020joint,luo2020hfel,ren2020accelerating,ng2020multi}. Authors of \cite{ng2020multi} proposed a multi-player game to study participants' reactions under
various incentive mechanisms in FL scenarios. \cite{bhagoji2019analyzing} studied the effects of malicious clients on FL. Considering the time slot division in the TDMA protocol, a control algorithm for FL over a wireless network was proposed in \cite{wang2019adaptive}. Authors of \cite{dinh2020federated} considered TDMA settings to jointly optimize the computation settings and time slot division in communication. However, the convergence analysis did not discuss the effects of communication network. As a result, the communication design becomes independent from training, without considering the tradeoff therein. Considering the package loss in communication, \cite{chen2020joint} analyzed the FL convergence rate and proposed a control algorithm. However, the convergence results are based on specific policies and can not provide more insights in general design. Besides, the local epoch, non-i.i.d. data set and partial participation were not considered, which restricts its generality. \cite{luo2020hfel,ren2020accelerating} also considered the optimization of communication settings. Although their communication models are solid, the convergence analysis still needs to be improved.

\subsection{Contributions}
Previous works have implemented a set of basic analysis models and scheduling algorithms for FL implementation. Therein, the convergence analysis of FL training still needs more observations of the background wireless network. Besides, part of the inequalities should be handled more tightly to reflect the trend of some important parameters. From the network design for FL perspective, one needs to consider the coupling properties between communication and computation. To resolve such problems, our contributions are summarized as follows.

\begin{enumerate}
\item The convergence rate analysis of FL training will be cast into a joint optimization problem of computing (training) and communication, in a more general setting with non-i.i.d. data sets, local training epochs, partial client participation, limited bandwidth, and package loss. By taking into account training settings as well as the impacts of the communication network on model aggregation, the derived convergence bound can be clearly divided into a computation part and a communication part with explicit physical meaning. Meanwhile, the tightness of the convergence bound is also improved. Thus, the impact of the intrinsic factors can be reflected with clearer physical meanings, fitting better in experiments.
\item Considering time and power consumption as the joint training cost, the general system cost for each training epoch is defined. By taking the convergence analysis as a bridge between FL training and wireless network, the overall setting for bandwidth and local training epoch are optimized with closed-form theoretical expressions. The result could fit the common knowledge of network and distributed learning, providing more insights of network settings for efficient FL. To the best of our knowledge, this is the first explicit result considering the tradeoff between computing (training) and communication in FL with closed-form principles for selecting hyper-parameters in a wireless network.
\item Considering the limitation of FL due to high level of distribution among clients, we propose network scheduling algorithms to enhance the cooperation among distributed clients. By adaptive resource scheduling, clients with varied capability and burden could achieve a better performance in synchronization. Given a specific network setting, the design could minimize the time and power cost accordingly. On the other hand, its derived costs will also affect the choice of the network hyper-parameters. Then by jointly considering the coupling factors, we set up an integrated design principle for FL implementation over wireless networks.
\end{enumerate}

\section{System model and Problem formulation}

The major focus is on a two-layer FL system composed of a central server and a set of $N$ distributed intelligent clients ($\mathbb{N}=\{1,2,...,N \}$). As shown in Fig. \ref{sys_model}, the server and clients are connected through a wireless network, where a cellular-based network is used as an example. Such a system could support various IoT applications such as environmental monitoring, health-care, etc.

\begin{figure}[tbp]
  \centering
  \includegraphics[width=3in]{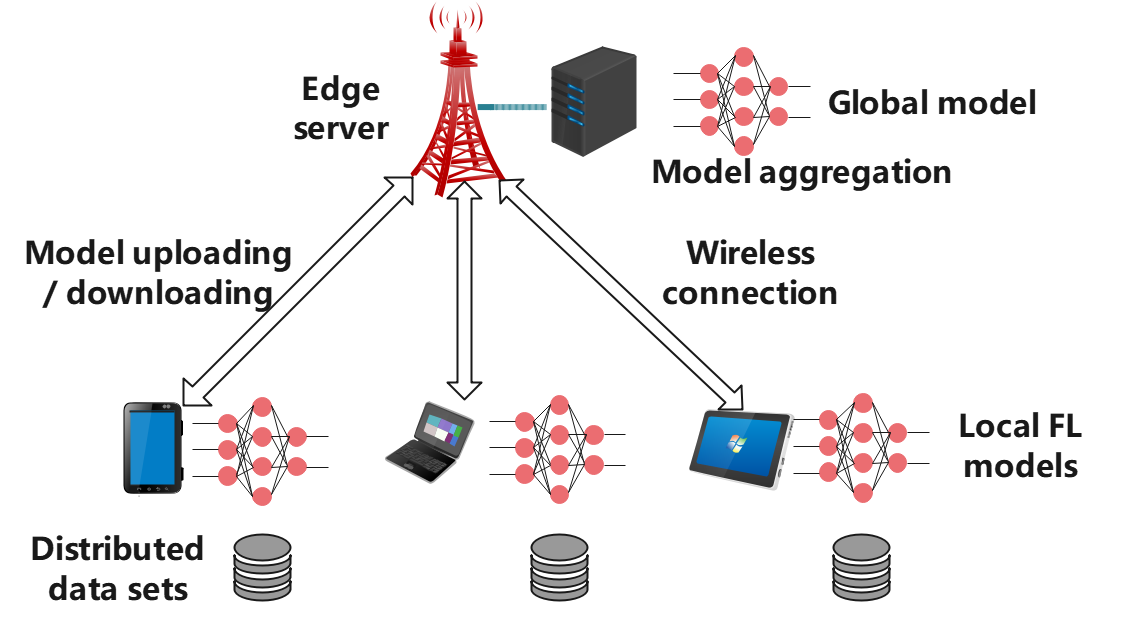}\\
  \caption{The structure of FL training over a wireless network.}\label{sys_model}
\end{figure}

\subsection{Federated learning process}\label{FL_learning}

In the FL system, the data set of client $j$ is $S_{j}=\left \{ (x_{1}^{(j)},y_{1}^{(j)}), (x_{2}^{(j)},y_{2}^{(j)}),...,(x_{D_{j}}^{(j)},y_{D_{j}}^{(j)}) \right \}$ with $D_{j}$ data samples. Before central aggregation, models are trained in distributed manner among clients. The loss function is
\begin{equation}
f_{j}(w) = \frac{1}{D_{j}} \sum_{i=1}^{D_{j}} l(w,(x_{i}^{(j)},y_{i}^{(j)})),
\end{equation}
where $l(w,(x_{i}^{(j)},y_{i}^{(j)}))$ is the loss function for model weight $w$ given $(x_{i}^{(j)},y_{i}^{(j)})$.
Denoting the weights of data set $S_{j}$ as $Q=\left \{ q_{1},q_{2},...,q_{N} \right \}$, the global loss function is
\begin{equation}\label{loss_f_g}
f(w)=\sum_{j=1}^{n} q_{j}f_{j}(w).
\end{equation}

\begin{table}[tbp]
\caption{The key notations}
\label{iid}
\centering
\begin{tabular}{|c|c|}
\hline
Notation& Definition \\
\hline
$N$ & The total number of clients participating in FL training.\\
\hline
$P_{t}$ & The current client set participating in model uploading.\\
\hline
$K$& Size of the randomly selected set $P_{t}$.\\
\hline
$D$ & The average size of distributed data sets among clients.\\
\hline
$E_{l}$ & Length of the local training epoch.  \\
\hline
$G_{\epsilon}$ & Total global epochs to reach loss $\epsilon$.\\
\hline
$\gamma$ & The package loss rate in wireless network.\\
\hline
$B$ & Total bandwidth for model uploading. \\
\hline
$\{a_{j}\}$ & Ratio of bandwidth allocation among clients in $P_{t}$.\\
\hline
$\{f_{j}\}$ & The processor frequency for local training.\\
\hline
$\{p_{j}^{0}\}$ & The uploading power for unit bandwidth.\\
\hline
$\{p_{j}\}$ & The model uploading power with $p_{j}=p_{j}^{0}a_{j}B$.\\
\hline
$r_{j}^{0}$ & The transmission rate with full bandwidth for client $j$.\\
\hline
$C_{u}$ & The expected model uploading cost in one global epoch.\\
\hline
$C_{u,0}$ & $E[\frac{1}{K}C_{u}]$: The expected uploading cost for unit $K$.\\
\hline
$C_{n}$ & The expected computation cost in one global epoch.\\
\hline
$C_{n,0}$ & $E[\frac{1}{E_{l}}C_{n}]$: The expected training cost for one local epoch.\\
\hline
$l_{0}$ & Weight of power cost in $C_{u}$ and $C_{n}$.\\
\hline
$\lambda$ & Metric of non-i.i.d. extent in distributed data sets.\\
\hline
$T_{d}$ & The time cost of model download.\\
\hline
\end{tabular}
\end{table}

The training process is done periodically by global epochs, each with $E_{l}$ local epochs. Let $t$ be an arbitrary discrete time slot for training, then $(t_{c}=\left \lfloor \frac{t}{E_{l}} \right \rfloor E_{l})$ should be the start slot of the current global epoch. At $t_{c}$, clients would receive renewed model weights $\bar{w}_{t_{c}}$ from the central server. As $t \neq t_{c}$, the distributed local training proceeds as
\begin{equation}\label{l_update}
w_{t_{c}+i+1}^{j} = w_{t_{c}+i}^{j} - \eta_{t_{c}+i} \bigtriangledown f_{j}(w_{t_{c}+i}^{j}),
\end{equation}
where $i=0,..,E_{l}-1$, $\eta_{t_{c}+i}$ is the learning rate and $\bigtriangledown f_{j}(w_{t_{c}+i}^{j})$ is the stochastic gradient in one local epoch. As local training is completed at $t=t_{c}+E_{l}$, the system will uniformly choose a client set $P_{t}$($|P_{t}|=K$,$K\leq N$) and aggregate their model weights ($w_{t}^{j}$, $j \in P_{t}$) at the server. Note that set $P_{t}$ is renewed at every round, so that each participating client gets the chance to upload its model. If $K=N$, all clients will be participators in each global epoch. The received weights are averaged as
\begin{equation}\label{gb_up_t}
\bar{w}_{t} = \frac{N}{K} \sum_{j \in P_{t}} q_{j}w_{t}^{j},
\end{equation}
where $E[\frac{N}{K} \sum_{j \in P_{t}} q_{j}]=1$.
In this process, the latency involves model uploading, central aggregation and model broadcast. FL training will then come to a new global epoch.

\subsection{Major problems and System Model}\label{basic_model}
FL highly depends on the communication network for model exchange in the training process.
Therein, more message exchanges in one round could reduce the total training epochs. However, due to limited bandwidth, it will also increase the data exchange time and power of each epoch. Meanwhile, scheduling of the highly distributed participators also affects the costs in model exchange.

In this case, crucial questions can be raised. How is FL training influenced by all the background network settings? What are the related principles to set the network parameters? How the network should be scheduled accordingly for implementation of a highly efficient FL training? In this paper, we try to answer these questions in theory and present related principles for the design and optimization of FL systems.

\subsubsection{Communication model}\label{com_model}

Let us consider FL over an orthogonal frequency division multiple access (OFDMA) network, which is widely adopted by 4G and 5G. As known, the uploading noise in OFMDA network rises with the allocated bandwidth. Thus, to obtain a rather stable transmission rate, the uploading power $p_{j}$ is set to be proportional to the allocated bandwidth. By denoting the total bandwidth as $B$, the transmission rate of client $j$ is given by
\begin{equation}\label{shannon}
r_{j}=a_{j}B{\rm log}_{2}\left ( 1+\frac{p_{j}h_{j}}{a_{j}BN_{0}} \right )
     =a_{j}B{\rm log}_{2}\left ( 1+\frac{p_{j}^{0}{h_{j}}}{N_{0}} \right )
\end{equation}
where $a_{j}$ ($j \in P_{t}$) is the proportion of bandwidth allocated to client $j$ with $\underset{j\in P_{t}}{\sum}a_{j} \leq 1$ and $p_{j}=p_{j}^{0}a_{j}B$. Therein, $p_{j}^{0}$ shall denote the transmission power for unit bandwidth and $h_{j}$ is the channel power gain of client $j$. The noise density in a wireless network is denoted as $N_{0}$. Then by defining $z_{m}$ as the required data size of transmitted model, the time cost of model uploading for client $j$ should be
\begin{equation}\label{up_time}
t_{u,j}=\frac{z_{m}}{r_{j}}=\frac{z_{m}}{a_{j}B{\rm log}_{2}\left ( 1+\frac{p_{j}^{0}{h_{j}}}{N_{0}} \right )}.
\end{equation}
Therein, the power consumption should be
\begin{equation}\label{up_power}
p_{u,j}=p_{j}t_{u,j}=p_{j}\frac{z_{m}}{r_{j}}.
\end{equation}
Note that the clients in set $P_{t}$ are selected uniformly in each global epoch, which would change with global epochs. Therefore, considering the expectation of the cost on $P_{t}$, the uploading cost is defined as
\begin{equation}\label{up_cost}
C_{u}=E_{P_{t}}[{\rm max}_{j \in P_{t}}\{ \frac{z_{m}}{a_{j}Br_{j}^{0}} \} + l_{0}\underset{j\in P_{t}}{\sum}p_{j}^{0}\frac{z_{m}}{r_{j}^{0}}],
\end{equation}
where $r_{j}^{0}={\rm log}_{2}\left ( 1+\frac{p_{j}{h_{j}}}{N_{0}} \right )$ is the transmission rate with unit bandwidth and $l_{0}$ is the weight of the power cost. Due to the synchronization requirement in model aggregation, the time cost is affected by the slowest client.


Wireless connection typically endures a random package loss due to the independent fading of wireless channels. Let us assume $K$ clients are uploading their model data simultaneously, and that $K_{\gamma}$ of them will be successfully received. For simplicity and considering the worst case, $K_{\gamma}\geq K(1-\gamma)$, where $\gamma < 1$. Note that the lost packages also take resources for transmission. Thus, the package loss does not influence $C_{u}$. Meanwhile, since the model aggregation here only involves $K_{\gamma}$ local models, FL training may take more global epochs to converge.

\subsubsection{Computation model}\label{computation_cost}
It is straightforward to see that computation latency in distributed training is proportional to the local epoch length $E_{l}$. The computation latency of client $j$ is
\begin{equation}\label{unit_computation}
t_{n,j}=E_{l}\frac{z_{n,j}}{f_{j}},
\end{equation}
where $f_{j}$ is the processor frequency and $z_{n,j}$ is the required processing cycles for one round of local training. The corresponding power consumption is
\begin{equation}\label{train_power}
p_{n,j}=E_{l}\kappa_{j}f_{j}^{2}z_{n,j},
\end{equation}
where $\kappa_{j}$ is a parameter depending on the specific processor on client $j$.

In fact, more data in training usually requires more processing cycles. This means that $z_{n,j}$ is proportional to $D_{j}$. Therefore, it is reasonable to assume that $z_{n,j}=\alpha_{0}D_{j}$ where $\alpha_{0}$ is an empirical parameter depending on training model and softwares. Meanwhile, since distributed training is an entirely local operation, the local computation cost should be the major focus in network scheduling. Thus, the average power consumption is taken instead of the summation of all training powers. Therefore, the computation cost of one global epoch is given by
\begin{equation}\label{com_cost}
C_{n}=E_{P_{t}}[{\rm max}_{j \in P_{t}}\{ E_{l}\frac{\alpha_{0}D_{j}}{f_{j}} \} + l_{0}\frac{1}{K}\underset{j\in P_{t}}{\sum}E_{l}\kappa_{j}f_{j}^{2}\alpha_{0}D_{j}].
\end{equation}


\subsection{Problem formulation}\label{problem_form}
We define $T_{d}$ as the extra time cost in the model broadcast by base station, which is rather fixed and determined by central resources. The package loss rate $\gamma$ is given as a background network parameter.
Then from the definitions of $C_{u}$ and $C_{n}$ in (\ref{up_cost}) and (\ref{com_cost}), the joint optimization problem for FL implementation over a wireless network can be expressed as follows.

\begin{align}
\min_{K,E_{l},\{a_{j}\},\{f_{j}\}} &G_{\epsilon}[C_{u}+C_{n}+T_{d}], \label{P_original}\\
s.t.\quad &G_{\epsilon} = G_{\epsilon}(E_{l},K,\gamma), \tag{\ref{P_original}{a}} \label{P_original_a}\\
&E_{l} \geq 1, 0\leq \gamma <1 \tag{\ref{P_original}{b}}, \label{P_original_b} \\
&1 \leq K \leq N, |P_{t}|=K \tag{\ref{P_original}{c}}, \label{P_original_c}\\
&\underset{j\in P_{t}}{\sum}a_{j} \leq 1, a_{j} > 0, j \in P_{t}, \tag{\ref{P_original}{d}} \label{P_original_d}\\
&f_{j}^{min}\leq f_{j} \leq f_{j}^{max}, j \in P_{t}. \tag{\ref{P_original}{e}} \label{P_original_e}
\end{align}

In the above formulation, the parameter $G_{\epsilon}$ is the number of global epochs taken by FL to reach a loss $\epsilon$, which will be given by convergence analysis. Constraint (\ref{P_original_a}) shows that $G_{\epsilon}$ is jointly affected by $K$, $\gamma$ and $E_{l}$. There exists an important tradeoff between $G_{\epsilon}$ and $C_{u}$. If $K$ increases with more clients uploading, $G_{\epsilon}$ will get smaller. However, from constraint (\ref{P_original_c}), more clients in $P_{t}$ leads to a smaller bandwidth for each client, which will in turn lead to a larger $C_{u}$. Therein, as the metric of FL training convergence, $G_{\epsilon}(E_{l},K,\gamma)$ is actually an important bridge between AI training and the communication network. In the subsequent section, we will derive the specific closed-form expression of $G_{\epsilon}$.

Constraints (\ref{P_original_b}), (\ref{P_original_c}), (\ref{P_original_d}) and (\ref{P_original_e}) are basic ranges in system settings. $E_{l}$ is the local training epoch, which stands for input from computation in training. $K$ is the size of $P_{t}$, representing the capability of model uploading provided by the wireless network. $\{a_{j}\}$ and $\{f_{j}\}$ are scheduling policies among clients, aimed at minimizing $(C_{u}+C_{n})$. Note that $K$ and $E_{l}$ are the hyper-parameters determining the total bandwidth division and local epoch. Given such settings, $\{a_{j}\}$ and $\{f_{j}\}$ are scheduled accordingly. By minimizing the cost for an arbitrary $P_{t}$, the expectation of the cost on random client selection can also be minimized. Meanwhile, on the contrary, the expected cost will in turn affect selection of $K$ and $E_{l}$, which can be reflected by theoretical results. By solving (\ref{P_original}), the integrated principles for the selection of the hyper-parameters and resource scheduling will be obtained.


\section{FL Convergence Analysis}
In the following, the analysis is based on the stochastic gradient descent (SGD) update. The training process is given in Section \ref{FL_learning} and the general non-i.i.d. data distribution is considered. In fact, the ideal i.i.d. data distribution is a special case therein. Beforehand, some common assumptions and the metric of non-i.i.d. in FL are first introduced. Then the expression of $G_{\epsilon}$ will be given and discussed in detail.

\subsection{Preparations}
\subsubsection{Assumptions on loss function}
The common assumptions of the L-smooth and $\mu$-Polyak-Lojasiewicz (PL) condition for the loss function are given as follows.

\begin{assumption}\label{L_smooth}
(L-smooth \cite{li2019convergence,haddadpour2019convergence}) The loss function $f(.)$ in FL training satisfies
\begin{align}
f(y) \leq f(x)+(y-x)^{T}\bigtriangledown f(x) + \frac{L}{2}\left \| y-x \right \|^{2}.
\end{align}
\end{assumption}

\begin{assumption}\label{P_L_condition}
($\mu$-P-L condition \cite{haddadpour2019convergence,karimi2016linear}) The loss function $f(.)$ in FL training satisfies a general extension of the $\mu$-strongly convex property, which is defined as
\begin{align}
|| \bigtriangledown f(x) ||^{2} \geq 2\mu[f(x)-f^{*}].
\end{align}
\end{assumption}
\begin{remark}
The P-L condition in Assumption \ref{P_L_condition} may not fit globally in classical neural networks like CNN. However, viewing $f^{*}$ as the local optimum, the assumption could still work. Therefore, it is reasonable to apply this in the convergence analysis. In our latter experiments with CNN, the rational of the assumption could be confirmed.
\end{remark}

\subsubsection{Non-i.i.d. data}
In practice, independent observed data at distributed clients typically diverge in probability distribution. For instance, some clients may have more data of cats while others may observe dogs. To proceed with the analysis, a commonly adopted measure is introduced to quantify such a property.

\begin{definition}\label{define_noniid}
\cite{haddadpour2019convergence,yin2018gradient}
Given $N$ clients with weights $\{\pi _{j}\} (\sum_{j=1}^{N}\pi_{j}=1)$ and $\{\bigtriangledown f_{j}(w)\}$ as their gradients, a measurement $\lambda$ for non-i.i.d. in data set is defined as
\begin{equation}\label{bound_non_iid}
  \frac{\sum_{j=1}^{N}\pi_{j}\left \| \bigtriangledown f_{j}(w) \right \|^{2}}{\left \| \sum_{j=1}^{N}\pi_{j}\bigtriangledown f_{j}(w) \right \|^{2}}=\wedge \leq \lambda.
\end{equation}
\end{definition}

\begin{remark}
It is known that $\lambda \geq 1$ by Jensen's inequality \cite{kuczma2009introduction}. Therein, $\lambda=1$ represents the i.i.d. condition. Parameter $\lambda$ reflects the non-i.i.d. extent of the stochastic gradients of $N$ clients.
\end{remark}

\subsection{Results of $G_{\epsilon}$}\label{conv_bound}

\begin{theorem}\label{estimate_G_epsilon}
Under Assumption \ref{L_smooth}), \ref{P_L_condition}) and Definition \ref{define_noniid}, if we choose the learning rate $\eta_{t}$ as $O(\frac{1}{t})$, $|P_{t}|=K$ and local epoch as $E_{l}$, $G_{\epsilon}$ is given by
\begin{align}\label{G_theorem}
    G_{\epsilon}= \frac{1}{\epsilon}\frac{4L^{2}G^{2}\lambda}{\mu^{2}}[ \frac{\lambda-1}{K(1-\gamma)}\frac{1}{2D} +\frac{(\lambda-1)E_{l}}{2C_{1}\phi_{0}}+\frac{1}{4}\frac{f_{0}}{E_{l}} ],
\end{align}
where $\gamma$ is the package loss rate in communication. Parameter $D=\frac{1}{N}\sum_{j=1}^{N}D_{j}$ is the average size of the training data and $G^{2}$ is defined as the gradient upper-bound. $C_{1}$ and $\sigma^{2}$ are constants related to the gradient variance in SGD and $(\phi_{0}-1)\propto \frac{\sigma^{2}}{\lambda}$.
Specific definitions of parameter $C_{1}$, $\sigma^{2}$ and $\phi_{0}$ can be found in Appendix \ref{appendix_pre} with the proof shown in Appendix \ref{proof_G}.
\end{theorem}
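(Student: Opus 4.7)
The plan is to track the expected suboptimality $\Delta_t := \mathbb{E}[f(\bar w_t) - f^{*}]$ of the aggregated iterate at the end of each global round and to derive a one-step recursion of the form $\Delta_{t+E_l} \le (1-2\mu\eta_t E_l)\Delta_t + \eta_t^{2} R$, where the error $R$ cleanly splits into the three pieces appearing in (\ref{G_theorem}). First I would apply the $L$-smoothness of Assumption \ref{L_smooth} to $f(\bar w_{t+E_l})$, expanding $\bar w_{t+E_l}$ through the local SGD updates (\ref{l_update}) and the aggregation rule (\ref{gb_up_t}), and then take expectation over (i) the stochastic minibatch gradients, (ii) the uniform sampling of $P_t$, and (iii) the independent package-loss mask that retains $K(1-\gamma)$ surviving uploads. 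The $\mu$-P-L condition of Assumption \ref{P_L_condition} is then used to convert the resulting $\|\nabla f\|^{2}$ term back into $\Delta_t$, yielding the contraction factor.

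The heart of the argument is bounding $R$, which I expect to decompose into three independent contributions matching the three summands in (\ref{G_theorem}). The first term, $\frac{\lambda-1}{K(1-\gamma)}\cdot\frac{1}{2D}$, comes from the variance of the aggregated mean over the surviving clients: conditioning on $P_t$ first lets the package-loss expectation replace $K$ by $K(1-\gamma)$, and Definition \ref{define_noniid} upper-bounds $\sum_j q_j \|\nabla f_j\|^{2}$ by $\lambda\|\nabla f\|^{2}$, the excess over the i.i.d. case being controlled by $\lambda-1$, while the remaining $1/D$ factor is the per-client minibatch variance with $D_j$ samples, averaged across clients. The second term, $\frac{(\lambda-1)E_l}{2C_1\phi_0}$, captures local-model drift: bounding $\|w_{t_c+i}^{j}-\bar w_{t_c}\|^{2}$ across the $E_l$ inner steps and summing yields a term that is linear in $E_l$ with a $(\lambda-1)$ prefactor, since perfectly i.i.d. clients would not drift apart in expectation. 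The third term, $\frac{f_0}{4E_l}$, is the residual per-round gradient mismatch: one global round performs $E_l$ descent steps against a fixed aggregation overhead, so its effective contribution scales like $1/E_l$.

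With $R$ controlled by a constant, I would solve the recursion using $\eta_t=\Theta(1/(\mu t))$ through the standard inductive argument that establishes $\Delta_t \le C/(\mu^{2} t)$ for an appropriate constant $C$. The outer factor $4L^{2}G^{2}\lambda/\mu^{2}$ in (\ref{G_theorem}) then arises from combining the Lipschitz constant $L$, the gradient upper bound $G^{2}$ (used to close off any remaining $\|\nabla f\|^{2}$ terms), the non-i.i.d. factor $\lambda$, and the $\mu^{-2}$ inherited from inverting the PL contraction. Imposing $\Delta_{G_\epsilon}\le\epsilon$ and solving for the round count yields the stated closed form.

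The main obstacle will be keeping the $E_l$-dependence linear rather than quadratic in the local-drift term: a naive per-step application of Definition \ref{define_noniid} inside the inner loop would give an $E_l^{2}$ factor and destroy the tradeoff structure behind the second summand. The remedy is to invoke the non-i.i.d. bound only once at the start of the round and then propagate the drift forward using only $L$-smoothness and the bounded-gradient assumption. A secondary technical point is disentangling the joint randomness of $P_t$ and the package-loss mask; I would handle this by iterated expectation, conditioning on $P_t$ first so that the loss averaging becomes an independent Bernoulli selection, and only then taking the outer expectation over the uniformly chosen $P_t$.
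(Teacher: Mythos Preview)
Your plan differs from the paper's in a structural way and contains a genuine misidentification of one of the three terms. The paper does \emph{not} run a per-global-round recursion on $\Delta_{t+E_l}$; instead it introduces the \emph{virtual} aggregated iterate $\bar w_t = \frac{1}{K_\gamma}\sum_{j\in P_{t,\gamma}} w_t^{j}$ at \emph{every local step} $t$ (not just at $t_c$) and derives a one-step recursion $E[f(\bar w_{t+1})]-f^{*}\le (1-\mu\eta_t)\bigl(E[f(\bar w_t)]-f^{*}\bigr)+\eta_t^{2}M$ by combining $L$-smoothness with two lemmas: one bounding $E\|\tilde g_t\|^{2}$ and one bounding the cross term $-\eta_tE\langle\nabla f(\bar w_t),\tilde g_t\rangle$, the latter containing the local-drift estimate $\sum_j q_j\|\bar w_t-w_t^{j}\|^{2}$. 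A standard induction (their Lemma~4) then gives $\Delta_t\le \tfrac{1}{t}\max\{\tfrac{4}{\mu^{2}}M,\,2L\lambda\Delta_1\}$. The number of global epochs is obtained at the very end by $G_\epsilon=t_\epsilon/E_l$, which divides out one factor of $E_l$ from the $M$ side.

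The key gap in your proposal is the third summand. In the paper, $f_0$ is \emph{defined} via the initial gradient, $\|\nabla f(w_0)\|^{2}\le f_0G^{2}$, and the $\tfrac{1}{4}\tfrac{f_0}{E_l}$ term comes entirely from the initial-loss branch $2L\lambda\Delta_1$ of the $\max$, after bounding $\Delta_1\le \tfrac{L}{2}\|w_0-w^{*}\|^{2}\le \tfrac{L f_0 G^{2}}{2\mu^{2}}$, replacing the $\max$ by a sum, and then dividing by $E_l$ in the passage from local steps to global epochs. It is not a ``residual per-round gradient mismatch,'' and no per-round argument of the type you describe will produce it with $f_0$ carrying that meaning. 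Relatedly, your worry about $E_l$ versus $E_l^{2}$ is resolved differently from what you propose: the paper's drift bound \emph{does} yield an $E_l^{2}$ factor inside $M$ (it applies Jensen over the $r\le E_l$ inner steps and then sums $E_l$ terms), and one factor of $E_l$ cancels only at the final step $G_\epsilon=t_\epsilon/E_l$. Your proposed remedy of invoking Definition~\ref{define_noniid} only once to keep the drift linear in $E_l$ is therefore neither necessary nor what the paper does. Your handling of the packet loss via iterated expectation over $P_t$ then the Bernoulli mask is essentially the paper's Lemma~1, so that part is fine.
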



In (\ref{G_theorem}), $\frac{\lambda-1}{K(1-\gamma)}\frac{1}{2D}$ and $(\frac{(\lambda-1)E_{l}}{2C_{1}\phi_{0}}+\frac{1}{4}\frac{f_{0}}{E_{l}})$ are two key terms with clear physical meanings, separately representing the impacts resulting from communication and computation. By observing $\frac{\lambda-1}{K(1-\gamma)}\frac{1}{2D}$, we could see that $K$ and $1-\gamma$ compensate with each other to speed up the convergence in the form of product.
The communication term is also proportional to $(\lambda-1)$.
Thus, the extent of non-i.i.d. will influence the effects of the communication network. This can be observed from the experiments in Section \ref{experiments}.

The effects of computation are reflected by $(\frac{(\lambda-1)E_{l}}{2C_{1}\phi_{0}}+\frac{1}{4}\frac{f_{0}}{E_{l}})$ in (\ref{G_theorem}).
The optimal $E_{l}$ can be derived as $E_{l}^{*}=\sqrt{\frac{C_{1}\phi_{0}f_{0}}{2(\lambda-1)}}$. As $E_{l}$ increases, the local training becomes more sufficient. But increasing $E_{l}$ may also cause convergence failure due to the large diversity among the distributed data sets. Thus, a proper selection of $E_{l}$ must be balanced to speed up the training. For the i.i.d. setting with $\lambda=1$, $E_{l}$ can be arbitrarily large, which would be limited by the capability of local processors.

It is note that a larger $K$ naturally leads to a smaller allocated bandwidth. Thus $C_{u}$ in (\ref{up_cost}) will definitely get larger. Combined with Theorem \ref{estimate_G_epsilon}, the tradeoff between $G_{\epsilon}$ and $C_{u}$ can be easily observed.

\section{Design principle for FL}
In this section, the joint design principle for parameter selection and resource scheduling will considered.

\subsection{Sub-problems for hyper-parameters and scheduling policy}
As known, the client set $P_{t}$ for model uploading is renewed by uniform selection in each global epoch. Due to different conditions of the clients, the specific system cost in each global epoch may be varied. Thus, in the definition of $C_{u}$ and $C_{n}$ in (\ref{up_cost}) and (\ref{com_cost}), the expectation on $P_{t}$ is taken to get the expected cost for each global epoch.

As discussed in Section \ref{problem_form}, $K$ and $E_{l}$ are network hyper-parameters while $\{a_{j}\}$ and $\{f_{j}\}$ are scheduling policies based on a given setting. Thus, it is natural to see that finding solution of the hyper-parameters and scheduling policy should be decoupled. Before the decoupling, the expected costs $C_{u}$ and $C_{n}$ should first be transformed as a function of the hyper-parameters.

From the definition in (\ref{com_cost}), $C_{n}$ is actually proportional to $E_{l}$. Therein, setting $C_{n}=E_{l}C_{n,0}$, the unit cost $C_{n,0}$ can be directly defined as
\begin{equation}\label{u_Cn}
C_{n,0}=E_{P_{t}}[{\rm max}_{j \in P_{t}}\{ \frac{\alpha_{0}D_{j}}{f_{j}} \} + \frac{l_{0}}{K}\underset{j\in P_{t}}{\sum}\kappa_{j}f_{j}^{2}\alpha_{0}D_{j}].
\end{equation}

Considering $C_{u}$, as $K$ increases, more clients will be uploading their models simultaneously with less bandwidth for each client. Since the clients are chosen uniformly in $P_{t}$, then $E[\frac{1}{a_{j}}] \propto K$ holds for an arbitrary policy to allocate the bandwidth among $K$ clients in $P_{t}$. That is, the expected allocated bandwidth for each client is inversely proportional to $K$. Then by observing the definition of $C_{u}$ in (\ref{up_cost}), the uploading cost should be proportional to $K$ in the form of $C_{u}=KC_{u,0}$.

Considering the full participation case with $K=N$, the uploading cost for an arbitrary scheduling policy can be given as $({\rm max}_{j=1}^{N}\{ \frac{z_{m}}{a_{j}Br_{j}^{0}} \} + l_{0}\sum_{j=1}^{N}p_{j}^{0}\frac{z_{m}}{r_{j}^{0}})$. Then combined with $E[\frac{1}{a_{j}}] \propto K$, it is natural to see that the cost $E_{P_{t}}({\rm max}_{j\in P_{t}}\{ \frac{z_{m}}{a_{j}Br_{j}^{0}} \} + l_{0}\underset{j\in P_{t}}{\sum} \frac{p_{j}^{0}z_{m}}{r_{j}^{0}})$ statistically equals to $\frac{K}{N}({\rm max}_{j=1}^{N}\{ \frac{z_{m}}{a_{j}Br_{j}^{0}} \} + l_{0}\sum_{j=1}^{N}p_{j}^{0}\frac{z_{m}}{r_{j}^{0}})$ for an arbitrary scheduling policy of $\{a_{j}\}$. Thus, by dividing the full participation cost with $N$, $C_{u,0}$ could be derived as
\begin{equation}\label{u_Cu}
C_{u,0}=\frac{1}{N}[{\rm max}_{j=1}^{N}\{ \frac{z_{m}}{a_{j}Br_{j}^{0}} \} + l_{0}\sum_{j=1}^{N}p_{j}^{0}\frac{z_{m}}{r_{j}^{0}}].
\end{equation}

By considering the expected cost in the form ($KC_{0}+E_{l}C_{n,0}$) for a given scheduling policy, the sub-problem for the optimal hyper-parameters is defined as follows.

\begin{align}
\min_{K,E_{l}} &G_{\epsilon}[KC_{u,0}+E_{l}C_{n,0}+T_{d}], \label{sub1}\\
s.t.\quad &G_{\epsilon} = G_{\epsilon}(E_{l},K,\gamma), \tag{\ref{sub1}{a}} \label{sub1_a}\\
&E_{l} \geq 1, 0\leq \gamma <1 \tag{\ref{sub1}{b}}, \label{sub1_b} \\
&1 \leq K \leq N. \tag{\ref{sub1}{c}} \label{sub1_c}
\end{align}

For an arbitrary $P_{t}$, $K$ and $E_{l}$, the sub-problem for the optimal scheduling policy is defined as
\begin{align}
\min_{\{a_{j}\},\{f_{j}\}} & C_{u,t}+C_{n,t}, \label{sub2}\\
s.t.\quad &C_{u,t}={\rm max}_{j \in P_{t}}\{ \frac{z_{m}}{a_{j}Br_{j}^{0}} \} + l_{0}\underset{j\in P_{t}}{\sum}p_{j}^{0}\frac{z_{m}}{r_{j}^{0}},\tag{\ref{sub2}{a}} \label{sub2_a}\\
&C_{n,t}={\rm max}_{j \in P_{t}}\{ E_{l}\frac{\alpha_{0}D_{j}}{f_{j}} \} + \frac{l_{0}}{K}\underset{j\in P_{t}}{\sum}E_{l}\kappa_{j}f_{j}^{2}\alpha_{0}D_{j},\tag{\ref{sub2}{b}} \label{sub2_b}\\
&\underset{j\in P_{t}}{\sum}a_{j} \leq 1, a_{j} > 0, j \in P_{t}, \tag{\ref{sub2}{c}} \label{sub2_c}\\
&f_{j}^{min}\leq f_{j} \leq f_{j}^{max}, j \in P_{t}. \tag{\ref{sub2}{d}} \label{sub2_d}
\end{align}

The coupling properties between sub-problems (\ref{sub1}) and (\ref{sub2}) can be explained as follows.
Given $K$, $P_{t}$ and $E_{l}$, the optimal $\{a_{j}\}$ and $\{f_{j}\}$ can be derived by solving sub-problem (\ref{sub2}). That is, by taking the hyper-parameters as the input, it would give the optimal resource scheduling and the corresponding costs. In this sense, sub-problem (\ref{sub2}) actually represents the optimal policy to minimize $C_{u,0}$ and $C_{n,0}$. The sub-problem (\ref{sub1}) takes $C_{u,0}$ and $C_{n,0}$ as the input to get the optimal hyper-parameters. By representing the costs $C_{u}$ and $C_{n}$ as $KC_{u,0}$ and $E_{l}C_{n,0}$, the hyper-parameters would be optimized by considering tradeoff between $G_{\epsilon}$ and $C_{u}+C_{n}$. Therein, $G_{\epsilon}$ is given theoretically by Theorem \ref{estimate_G_epsilon}.

The joint optimal design for FL implementation is given by combining the two sub-problems. Sub-problem (\ref{sub2}) gives the scheduling policy for arbitrarily given parameters, while the expected costs given by such policy are taken as input to optimize the hyper-parameters in sub-problem (\ref{sub1}). Given $G_{\epsilon}$ as the bridge between training and communication, the hyper-parameters would consider the coupling effects from AI training and the wireless network with an optimal balance. By properly setting the hyper-parameters and scheduling policy therein, FL training with minimum cost in time and power can be achieved.

\subsection{Optimal hyper-parameters}\label{opt_KE}
By solving sub-problem (\ref{sub1}) combined with Theorem \ref{estimate_G_epsilon}, the principles for the optimal hyper-parameters can be derived.
\subsubsection{Bandwidth setting}
\begin{theorem}\label{K_result}
In FL training, the system cost related to $E_{l}$ and $K$ is given by $(KC_{u,0}+E_{l}C_{n,0}+T_{d})$. Under Theorem \ref{estimate_G_epsilon}, the optimal $K$ in (\ref{P_original}) for FL implementation is
\begin{align}\label{optimal_K}
K^{*}=\rho_{0}\sqrt{\frac{E_{l}}{D(1-\gamma)}} \sqrt{\frac{C_{n,0}+T_{d}/E_{l}}{C_{u,0}}},
\end{align}
where $\rho_{0}=\sqrt[4]{\frac{2(\lambda-1)C_{1}\phi_{0}}{f_{0}}}$ is a multiplier related to the background training process.
The proof is in Appendix \ref{proof_K_result}.
\end{theorem}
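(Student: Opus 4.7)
The plan is to substitute the closed-form convergence bound from Theorem \ref{estimate_G_epsilon} into the cost function of sub-problem (\ref{sub1}) and then minimize over $K$ (with $E_{l}$ held fixed) by a standard one-dimensional argument. Concretely, I would let $g=\frac{4L^{2}G^{2}\lambda}{\epsilon\mu^{2}}$, $\alpha=\frac{\lambda-1}{2D(1-\gamma)}$, $\beta=\frac{(\lambda-1)E_{l}}{2C_{1}\phi_{0}}+\frac{f_{0}}{4E_{l}}$, and $M=E_{l}C_{n,0}+T_{d}$, so that the objective $G_{\epsilon}[KC_{u,0}+E_{l}C_{n,0}+T_{d}]$ expands to
\begin{equation*}
g\Bigl(\tfrac{\alpha}{K}+\beta\Bigr)\bigl(KC_{u,0}+M\bigr)=g\bigl[\alpha C_{u,0}+\beta M\bigr]+g\Bigl[\beta C_{u,0}\,K+\tfrac{\alpha M}{K}\Bigr].
\end{equation*}
The first bracket is $K$-free, so only the second bracket, which has the canonical form $aK+b/K$, matters.

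Next, I would apply the AM--GM inequality (equivalently, set the derivative in $K$ to zero), which gives the minimizer $K^{*}=\sqrt{b/a}=\sqrt{\alpha M/(\beta C_{u,0})}$. Substituting the definitions of $\alpha$ and $M$ and factoring $E_{l}$ out of $M$ as $E_{l}(C_{n,0}+T_{d}/E_{l})$, I obtain
\begin{equation*}
K^{*}=\sqrt{\tfrac{\lambda-1}{2\beta}}\,\sqrt{\tfrac{E_{l}}{D(1-\gamma)}}\,\sqrt{\tfrac{C_{n,0}+T_{d}/E_{l}}{C_{u,0}}},
\end{equation*}
which already has the product structure stated in the theorem, with $\rho_{0}=\sqrt{(\lambda-1)/(2\beta)}$.

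The hard part, and the only non-mechanical step, is eliminating the residual $E_{l}$-dependence inside $\rho_{0}$ so that it becomes the constant $\sqrt[4]{2(\lambda-1)C_{1}\phi_{0}/f_{0}}$ claimed in the statement. The computation-term coefficient $\beta$ still depends on $E_{l}$, but AM--GM on its two summands gives $\beta\ge\sqrt{(\lambda-1)f_{0}/(2C_{1}\phi_{0})}$ with equality exactly at the optimal local-epoch length $E_{l}^{*}=\sqrt{C_{1}\phi_{0}f_{0}/(2(\lambda-1))}$ identified in the discussion following Theorem \ref{estimate_G_epsilon}. Inserting this value of $\beta$ into $\sqrt{(\lambda-1)/(2\beta)}$ collapses the $(\lambda-1)$, $C_{1}$, $\phi_{0}$, and $f_{0}$ factors into a fourth-root expression of the stated form, giving $\rho_{0}$ independent of $E_{l}$.

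Finally, I would close the argument by verifying that the second-order condition holds (the function $aK+b/K$ is strictly convex in $K>0$, so the stationary point is indeed a minimizer) and that $K^{*}$ lies in the feasible range $[1,N]$ under the mild regime assumed by the model; if not, the boundary value is taken. I expect no further subtlety beyond the $E_{l}$-coupling step above, since every other manipulation is either algebraic rearrangement or direct substitution from Theorem \ref{estimate_G_epsilon}.
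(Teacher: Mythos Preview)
Your proposal is correct and follows essentially the same route as the paper's proof in Appendix~\ref{proof_K_result}: write $G_{\epsilon}\cdot C_{g}$ as $(\alpha/K+\beta)(KC_{u,0}+M)$, minimize the $K$-dependent part $\beta C_{u,0}K+\alpha M/K$ via AM--GM, and then evaluate the computation term $\beta$ at the optimal $E_{l}^{*}=\sqrt{C_{1}\phi_{0}f_{0}/(2(\lambda-1))}$ to obtain the constant $\rho_{0}$. The only cosmetic difference is ordering: the paper substitutes $E_{l}^{*}$ into $G_{\epsilon}$ \emph{before} optimizing over $K$ (so $\beta$ is already the constant $\sqrt{(\lambda-1)f_{0}/(8C_{1}\phi_{0})}$ from the outset), whereas you first optimize over $K$ with $\beta$ left general and then specialize; both yield the same $K^{*}$.
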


Viewing $\frac{T_{d}}{E_{l}}$ as the unit waiting time to begin local training, $T_{d}$ can be considered as part of the computation cost. Thus, $K^{*}$ is related to the ratio between $C_{n,0}$ and $C_{u,0}$.
As $C_{u,0}$ gets larger, a higher communication cost would decrease $K^{*}$. Meanwhile, a larger $C_{n,0}$ is matched with larger $K^{*}$, so that more sufficient training will not be wasted by insufficient communication. In this case, the design achieves a balance by comparing costs in communication and computation.

The tendency of $K^{*}$ in Theorem \ref{K_result} could fit common insights of FL training over wireless networks. By observing that $\sqrt{\frac{E_{l}}{D(1-\gamma)}}$, $K^{*}$ increases with $E_{l}$ and decreases with $1-\gamma$. In non-i.i.d. data sets, increasing $E_{l}$ results in a higher level of model divergence among distributed clients. It is then natural to expect a larger $K$. By considering the term $\sqrt{\frac{1}{1-\gamma}}$, a larger $\gamma$ means a lower rate of successful transmission in a wireless channel. Thus, $K$ should be increased for compensation.

The properties of the data sets and training algorithm also affect the network setting, reflected by $\rho_{0}$. By referring to Definition \ref{define_noniid}, $\lambda$ is the metric of non-i.i.d. case. As $\lambda$ increases, the local gradients will become more diverged, which leads to a larger $K$ for compensation. Meanwhile, $C_{1}$, $\phi_{0}$ and $f_{0}$ are related to the gradient variance and initial training loss, which also affect the need for model aggregation. The specific definitions of these parameters can be found in Appendix \ref{appendix_pre} and Appendix \ref{proof_G}.

As a coupled parameter in training and communication, $K^{*}$ is jointly determined by the training algorithm and wireless network. The closed-formed expression in Theorem \ref{K_result} reflects the specific influence of the cost ratio, $E_{l}$, $\gamma$, $D$ and $\lambda$. In system design, these factors should be jointly considered to adjust settings in wireless networks.


\subsubsection{$E_{l}$ setting}\label{setE_l}
The setting of $E_{l}$ is considered to minimize $G_{\epsilon}$, with $E_{l}^{*}=\sqrt{\frac{C_{1}\phi_{0}f_{0}}{2(\lambda-1)}}$ as the optimum value.
In FL training, the local epoch $E_{l}$ is originally introduced to lower down the frequency of communication \cite{mcmahan2017communication,li2019convergence}.
This is due to the fact that computation resources are typically more sufficient. Meanwhile, local training can be organized locally without complexity in interaction. At local processors, some more rounds of training may not increase much cost compared with the whole process of model uploading and downloading. Thus, in FL training, it is reasonable to set $E_{l}^{*}$ to minimize $G_{\epsilon}$.

Though $\phi_{0}$, $C_{1}$ and $f_{0}$ makes it hard to directly compute $E_{l}^{*}$, experiments that shall be provided later will show that $E_{l}^{*}$ does exit. Thus, assisted by the theory and estimation in experiments, $E_{l}^{*}$ can be obtained as one empirical parameter.

By jointly considering the results of $K^{*}$ and $E_{l}^{*}$ and the background loss rate $\gamma$, the integrated principles for choosing hyper-parameters can be obtained. Such design could ensure a faster training convergence with less power cost, which is meaningful for FL implementation.

\subsection{Scheduling policy}\label{opt_af}
The principles for optimizing the hyper-parameters have been discussed theoretically in Section \ref{opt_KE}. Given $K$, $E_{l}$ and a randomly selected client set $P_{t}$, the solution of sub-problem (\ref{sub2}) for optimal $\{a_{j}\}$ and $\{f_{j}\}$ will be discussed in this sub-section.

From (\ref{sub2}), (\ref{sub2_a}) and (\ref{sub2_b}), the optimization objective of sub-problem (\ref{sub2}) can be written as $[{\rm max}_{j \in P_{t}}(\frac{z_{m}}{a_{j}Br_{j}^{0}})+{\rm max}_{j \in P_{t}}(\frac{E_{l}\alpha_{0}D_{j}}{f_{j}})+l_{0}\underset{j\in P_{t}}{\sum}(\frac{z_{m}p_{j}^{0}}{r_{j}^{0}}+\frac{1}{K}E_{l}\alpha_{0}\kappa_{j}D_{j}f_{j}^{2})]$. Note that that in practice, once the local training is completed, a client may immediately upload the model without waiting for others. Thus, the computation time and uploading time can be combined together with the transformed objective as $[{\rm max}_{j \in P_{t}}(\frac{z_{m}}{a_{j}Br_{j}^{0}}+\frac{E_{l}\alpha_{0}D_{j}}{f_{j}})+l_{0}\underset{j\in P_{t}}{\sum}(\frac{z_{m}p_{j}^{0}}{r_{j}^{0}}+\frac{1}{K}E_{l}\alpha_{0}\kappa_{j}D_{j}f_{j}^{2})]$.
Then by defining parameter $H$ with $\frac{z_{m}}{a_{j}Br_{j}^{0}}+\frac{E_{l}\alpha_{0}D_{j}}{f_{j}} \leq H$ for arbitrary $j$ in set $P_{t}$, the sub-problem (\ref{sub2}) can be transformed as follows.

\begin{align}
\min_{\{a_{j}\},\{f_{j}\}} & l_{0}\underset{j\in P_{t}}{\sum}(\frac{z_{m}p_{j}^{0}}{r_{j}^{0}}+\frac{1}{K}E_{l}\alpha_{0}\kappa_{j}D_{j}f_{j}^{2}) +H, \label{sub3}\\
s.t.\quad & H \geq \frac{z_{m}}{a_{j}Br_{j}^{0}}+\frac{E_{l}\alpha_{0}D_{j}}{f_{j}}, \tag{\ref{sub3}{a}} \label{sub3_a}\\
&\underset{j\in P_{t}}{\sum}a_{j} \leq 1, a_{j} > 0, j \in P_{t}, \tag{\ref{sub3}{b}} \label{sub3_b}\\
&f_{j}^{min}\leq f_{j} \leq f_{j}^{max}, j \in P_{t}. \tag{\ref{sub3}{c}} \label{sub3_c}
\end{align}
It is straightforward to observe that (\ref{sub3}) is a convex optimization problem. The corresponding Lagrange function can be defined as
\begin{align}\label{L_function}
L=&l_{0}\underset{j\in P_{t}}{\sum}(\frac{z_{m}p_{j}^{0}}{r_{j}^{0}}+\frac{1}{K}E_{l}\alpha_{0}\kappa_{j}D_{j}f_{j}^{2})+H\notag\\&+R(\underset{j\in P_{t}}{\sum}a_{j}-1)+\underset{j\in P_{t}}{\sum}\beta_{j}(\frac{z_{m}}{a_{j}Br_{j}^{0}}+\frac{E_{l}\alpha_{0}D_{j}}{f_{j}}-H),
\end{align}
where $R$ and $\{\beta_{j}\}$ are dual variables.
By KKT conditions on (\ref{L_function}), the closed-form expression for $a_{j}$ can be obtained by the following theorem.
\begin{theorem}\label{t_aj}
Given the transmission power $p_{j}^{0}$, channel gain $h_{j}$ and background noise density $N_{0}$, then the bandwidth allocation is given by
\begin{equation}\label{a_j}
a_{j}=\frac{\sqrt{\frac{\kappa_{j}f_{j}^{3}}{r_{j}^{0}}}}{\underset{j\in P_{t}}{\sum} \sqrt{\frac{\kappa_{j}f_{j}^{3}}{r_{j}^{0}}}}.
\end{equation}
The proof is provided in Appendix \ref{proof_t_aj}.
\end{theorem}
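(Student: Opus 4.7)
The plan is to apply the KKT conditions to the Lagrangian in (\ref{L_function}) and solve the resulting stationarity system by eliminating the dual variables one by one, leaving $a_j$ expressed only through the primal quantities $\{f_j, r_j^0, \kappa_j\}$.

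First, I would differentiate $L$ with respect to $H$, $a_j$, and $f_j$ separately. The derivative in $H$ is the simplest: $\partial L/\partial H = 1 - \sum_{j\in P_t}\beta_j = 0$, which pins down $\sum_{j\in P_t}\beta_j = 1$ and confirms that the epigraph constraint $H \geq \tfrac{z_m}{a_j B r_j^0}+\tfrac{E_l\alpha_0 D_j}{f_j}$ is tight for at least one $j$. The derivative in $a_j$ gives $R = \beta_j\,\tfrac{z_m}{a_j^2 B r_j^0}$, so
\begin{equation*}
a_j \;=\; \sqrt{\frac{\beta_j\, z_m}{R\,B\,r_j^0}}.
\end{equation*}
The derivative in $f_j$ gives, for interior $f_j \in (f_j^{\min}, f_j^{\max})$, the relation $\tfrac{2 l_0}{K}E_l\alpha_0\kappa_j D_j f_j \;=\; \beta_j\,\tfrac{E_l\alpha_0 D_j}{f_j^2}$, which I can solve cleanly for the dual variable $\beta_j = \tfrac{2 l_0 \kappa_j f_j^3}{K}$.

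Next, I would substitute this $\beta_j$ into the expression for $a_j$ above, absorbing all $j$-independent factors ($l_0$, $K$, $z_m$, $R$, $B$) into a single constant $\Lambda$ that does not depend on the client index. This yields $a_j = \Lambda\sqrt{\kappa_j f_j^3 / r_j^0}$. Finally, imposing the bandwidth budget constraint $\sum_{j\in P_t} a_j = 1$ (which is tight at optimum, since otherwise every $a_j$ could be increased to lower $H$) determines $\Lambda$ by a single division and produces exactly the claimed closed-form (\ref{a_j}).

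The proof is almost entirely mechanical once the KKT conditions are set up, since both the $a_j$-stationarity and the $f_j$-stationarity decouple across $j$ given $\beta_j$. The only subtlety I expect is justifying that the box constraint in (\ref{sub3_c}) is inactive so that the interior first-order condition in $f_j$ is the one that applies; this would be argued by the convexity of (\ref{sub3}) and by noting that the stationary $f_j$ derived from $\beta_j = \tfrac{2l_0 \kappa_j f_j^3}{K}$ together with the tight epigraph constraint lies in $[f_j^{\min}, f_j^{\max}]$ under the stated operating regime (and otherwise $f_j$ is clamped, which is the case addressed in the companion theorem for $\{f_j\}$). No further obstacle is anticipated; the rest is algebraic bookkeeping.
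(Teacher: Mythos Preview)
Your proposal is correct and follows essentially the same route as the paper: take the KKT stationarity conditions of the Lagrangian (\ref{L_function}) with respect to $a_j$ and $f_j$, solve the $f_j$-condition for $\beta_j=\tfrac{2l_0}{K}\kappa_j f_j^3$, substitute into the $a_j$-condition, and normalize via $\sum_{j\in P_t}a_j=1$. The paper's proof omits your $\partial L/\partial H$ step and your discussion of the inactive box constraint on $f_j$, but the core argument is identical.
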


Based on Theorem \ref{t_aj}, the scheduling policy in (\ref{sub2}) is proposed as follows.

\subsubsection{Optimal centralized solution}
From Theorem \ref{t_aj}, $a_{j}$ in (\ref{sub2}) can be substituted by $f_{j}$ and (\ref{sub2}) would be a convex problem for $f_{j}$. Then the optimal $\{f_{j}\}$ can be derived at the central server by applying convex optimization solvers and $\{a_{j}\}$ can be solved further.

Such a process could give an optimal solution by central scheduling. Therein, $\{a_{j}\}$ is the central bandwidth allocation, which is fitted for central scheduling. However, in the FL training mode, as the local processor frequency, it would be better for $f_{j}$ to be determined by clients locally. That is, client $j$ should directly schedule its own $f_{j}$ without counting on control from a central server. Besides, since set $P_{t}$ is renewed in each global epoch, frequently using optimization solvers to solve such a problem may cause much higher costs. Therefore, we propose a substitute method to solve for near-optimal $\{f_{j}\}$ in a distributed manner, which is rather simple and direct for FL engineering scenarios.

\subsubsection{Distributed scheduling policy}\label{dis_schedule}
Before optimization of $f_{j}$, the average processor frequency $\bar{f}=\frac{1}{N}\sum_{j=1}^{N} f_{j}$ is first considered as the basis. From definition of $C_{n}$ in (\ref{com_cost}), suppose that all clients take unified parameters $f_{j}=\bar{f}$, $D_{j}=D=\frac{1}{N}\sum_{j=1}^{N}D_{j}$ and $\kappa_{j}=\kappa=\frac{1}{N}\sum_{j=1}^{N}\kappa_{j}$, $\bar{f}$ is chosen to minimize the averaged computation cost $(E_{l}\frac{\alpha_{0}D}{\bar{f}} + l_{0}E_{l}\kappa \bar{f}^{2}\alpha_{0}D)$. By simple calculation of its stationary point, the optimal $\bar{f}$ is
\begin{equation}\label{ave_f}
\bar{f}=\sqrt[3]{\frac{1}{2l_{0}\kappa}}.
\end{equation}
Then due to the local data size $D_{j}$, $f_{j}$ can be given by
\begin{equation}\label{opt_fj}
f_{j}=\frac{D_{j}}{D}\bar{f}=\frac{D_{j}}{D}\sqrt[3]{\frac{1}{2l_{0}\kappa}}.
\end{equation}
Considering constraint $f_{j}^{min} \leq f_{j} \leq f_{j}^{max}$, the final $f_{j}$ can be derived by taking the intersection between (\ref{opt_fj}) and $[f_{j}^{min}, f_{j}^{max}]$.
In this process, the employed $\bar{f}$ in (\ref{ave_f}) considers the tradeoff between time and power in computation. With $\bar{f}$ as the basic reference, clients will schedule $f_{j}$ locally to achieve a balance of the distributed training time. Note that $\bar{f}$ is a rather fixed parameter, which can be optimized beforehand and reported to clients for reference. Thus, $f_{j}$ can be directly determined by clients in each round without counting on central control. Then, together with Theorem \ref{t_aj}, the optimal $a_{j}$ can be obtained.

\subsection{Integrated scheduling process}
Based on the principles for selecting the hyper-parameters in Section \ref{opt_KE} and the optimal scheduling policy in Section \ref{dis_schedule}, the proposed integrated scheduling process is shown in Fig. \ref{schedule_process}.

Before the learning process, the hyper-parameters $E_{l}$ and $K$ are first selected. $E_{l}$ is adjusted due to the extent of non-i.i.d. in data sets by referring to Theorem \ref{estimate_G_epsilon} and discussions in Section \ref{setE_l}. From the scheduling policy in Section \ref{dis_schedule}, the unit computation cost $C_{n,0}$ in (\ref{u_Cn}) for Theorem \ref{K_result} can be given as $C_{n,0}=\frac{\alpha_{0}D}{\bar{f}} + l_{0}\bar{f}^{2}\alpha_{0}\kappa D$. By combining Section \ref{dis_schedule} and Theorem \ref{t_aj}, the bandwidth allocation policy can be taken in (\ref{u_Cu}) to obtain $C_{u,0}$. Thus, by combining the cost from scheduling policy and the principles from the theoretical results, $K$ can be adjusted due to the cost ratio, package loss rate $\gamma$, average data size $D$ and $E_{l}$ by referring to Theorem \ref{K_result}.

\begin{figure}
  \centering
  \includegraphics[width=0.48\textwidth]{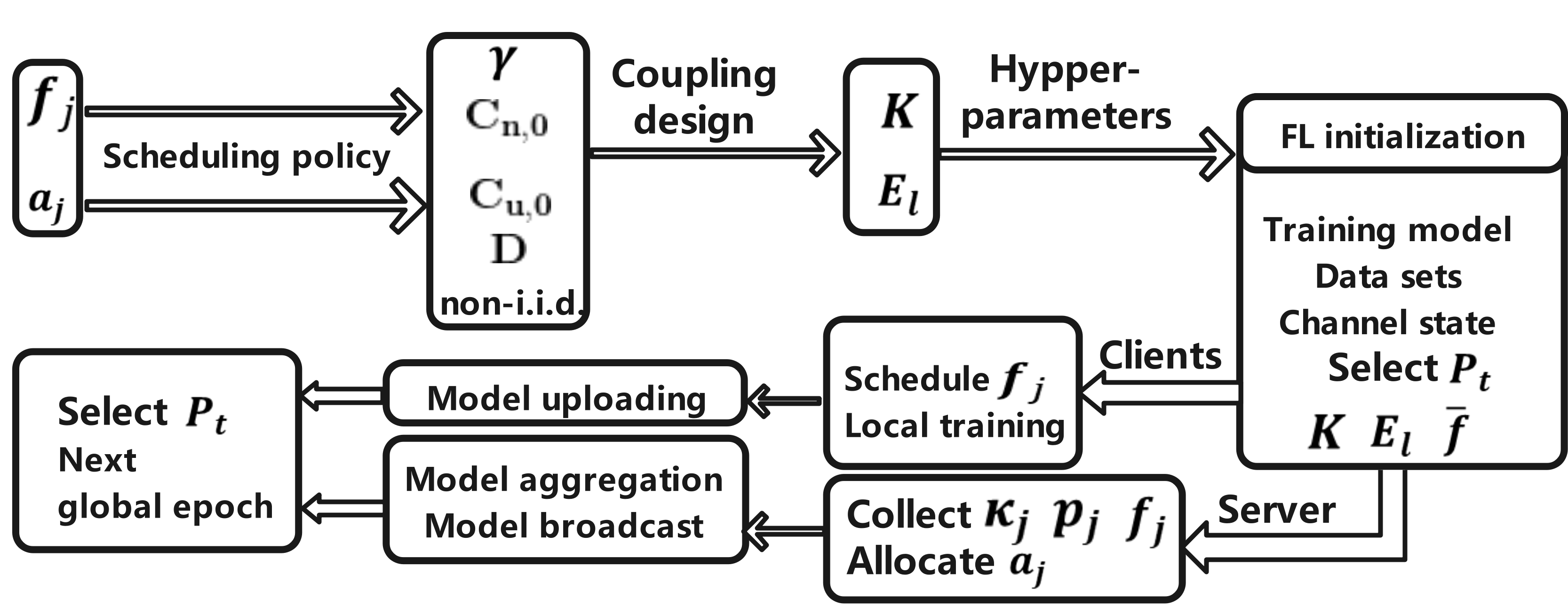}\\
  \caption{The scheduling process for FL in wireless networks.}\label{schedule_process}
\end{figure}

\begin{figure*}[tbp]
\centering  
\subfigure[Non-i.i.d. data set with $K$]{
\label{G_K}
\includegraphics[width=0.32\textwidth]{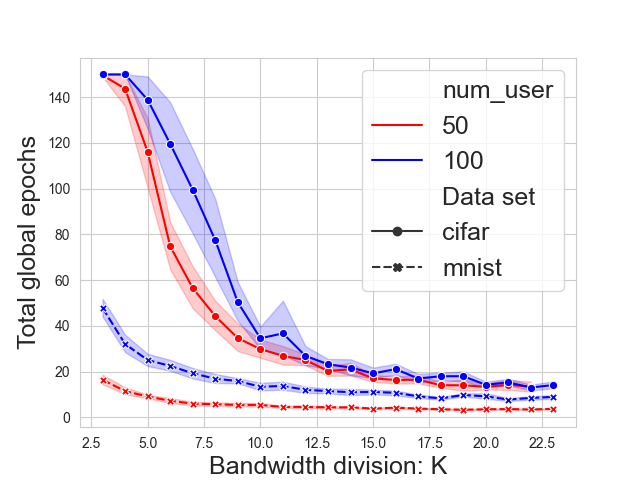}}
\subfigure[i.i.d. data set with $K$]{
\label{G_Kiid}
\includegraphics[width=0.32\textwidth]{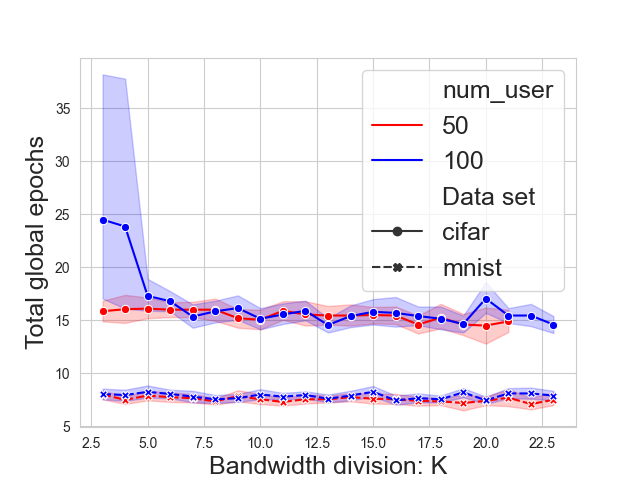}}
\subfigure[Non-i.i.d. data set with $E_{l}$]{
\label{G_E1}
\includegraphics[width=0.32\textwidth]{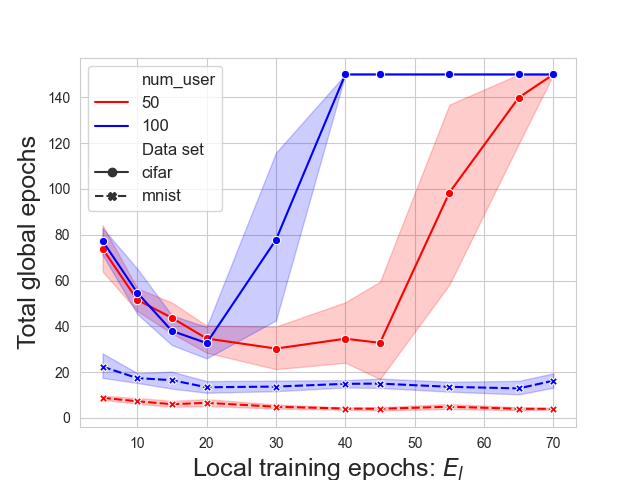}}
\caption{Trend of the total global epochs with respect to the bandwidth division and local epochs.}
\label{Fig.G}
\end{figure*}

FL begins with initialization of the training model, data sets and channel state of distributed clients. Note that some clients may endure a very bad channel or process very little data. To achieve an efficient scheduling, such clients will be first removed from the candidate client set. That is, the client set $P_{t}$ will exclude those in extremely bad conditions. Meanwhile, $\bar{f}$ in (\ref{ave_f}) will be initialized and broadcast for local scheduling of $f_{j}$. Then the random set $P_{t}$ will be selected before the training process begins. As the local training is conducted at distributed clients, the server will collect distributed states and allocate the bandwidth at the same time. Once the local training ends, the models will be uploaded and aggregated. Then $P_{t}$ will be renewed and the next global begins. Note that $\{a_{j}\}$ and $\{f_{j}\}$ are renewed for set $P_{t}$ in each global epoch. As shown in Fig. \ref{schedule_process}, the scheduling policy can be performed in parallel with the training process without much complexity.


\section{Experiments}\label{experiments}

\subsection{Basic test settings}
The database for training are selected as the classical MNIST and CIFAR10. The FL setting is the same as \cite{mcmahan2017communication} and the models are based on CNN. To simulate the distributed data sets in FL, all samples in the training data set are distributed among $N$ clients. As both MNIST and CIFAR10 are for multi-class classification, the non-i.i.d. data sets are realized by distributing the training data unevenly among the class labels. Meanwhile, the parameters for the communication channel are set as $B=20MHz$ and $N_{0}=5 \times 10^{-20}$. The random channel power gain is generated following an exponential distribution, where $h_{j}=g_{0}(\frac{d_{0}}{d})^{\theta} exp(1)$ with $g_{0}=10^{-4}$, $\theta=4$, $d_{0}=1$, and $d=200$.


\subsection{FL convergence}
Fig. \ref{G_K}, Fig. \ref{G_Kiid} and Fig. \ref{G_E1} show the trend of $G_{\epsilon}$ with respect to the parameters $K$ and $E_{l}$ in a non-i.i.d. setting. The target loss $\epsilon$ in the non-i.i.d. data sets is set as $1.88$ for CIFAR and $0.81$ for MNIST. For the i.i.d. case, it is set as $1.68$ for CIFAR and $0.2$ for MNIST. Total number of clients are set as $50$ and $100$. The size of the data sets in each client with $N=50$ is twice as much as that with $N=100$. Each point in the figure is obtained by running $20$ times with Monte Carlo simulation. The stripe around the curves represents the confidence interval and the maximum global training epoch is set as $150$ in each experiment.

As shown in Fig. \ref{G_K}, $G_{\epsilon}$ is approximately inversely proportional to $K$, consistent with the results in Theorem \ref{estimate_G_epsilon}. For $N=50$, $G_{\epsilon}$ gets smaller compared with $G_{\epsilon}$ for $N=100$. By observing the result in Theorem \ref{estimate_G_epsilon}, $N=50$ corresponds to a larger $D$, which is consistent with such a trend.

In the i.i.d. setting, Fig. \ref{G_Kiid} shows that $K$ has little effect on the convergence speed. By observing Theorem \ref{estimate_G_epsilon}, as $\lambda=1$, there is $\frac{\lambda-1}{K(1-\gamma)}\frac{1}{2D}=0$. Thus, the effects of $K$ and $D$ vanishes, which is consistent with the results listed in Fig. \ref{G_Kiid}. Thus, in the i.i.d. setting, we could set a smaller $K$ to save bandwidth.

Fig. \ref{G_E1} shows that $G_{\epsilon}$ has a minimum value with respect to the local epoch $E_{l}$. Such a tendency is consistent with the result in Theorem \ref{estimate_G_epsilon}. As MNIST is a rather simple data set for CNN, it is not sensitive to $E_{l}$. For the results in Fig. \ref{G_K} and Fig. \ref{G_Kiid}, $E_{l}$ is set as $20$ to get a relatively lower frequency of the model uploading.

\begin{figure}[!h]
\centering  
\subfigure[$G_{\epsilon}$ vs $\gamma$ ($K=10$, $E_{l}=20$)]{
\label{gamma1}
\includegraphics[width=0.23\textwidth]{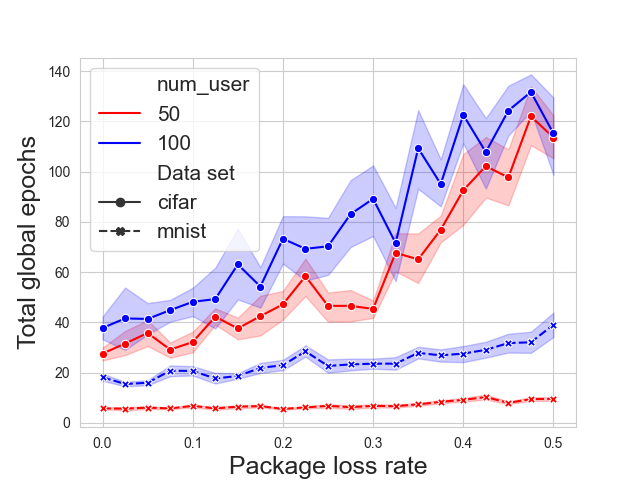}}
\subfigure[$G_{\epsilon}$ vs $K$ by referring to $\gamma$]{
\label{gamma2}
\includegraphics[width=0.23\textwidth]{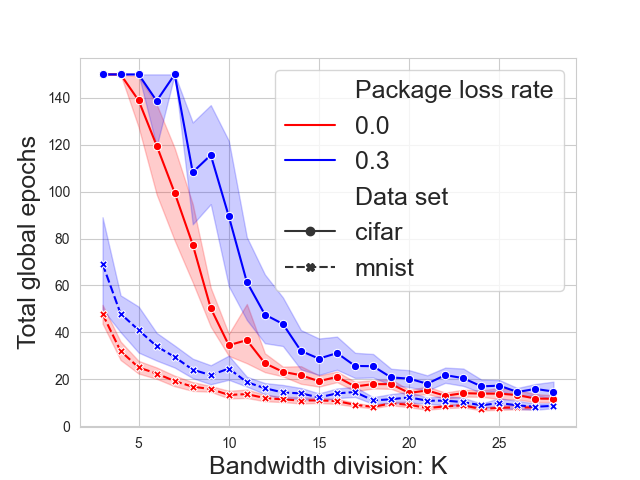}}
\caption{Impact of the package loss rate $\gamma$ on FL convergence.}
\label{Fig_gamma}
\end{figure}

Fig. \ref{Fig_gamma} shows the impact of the package loss rate $\gamma$. From Fig. \ref{gamma1}, as $\gamma$ increases from $0$ to $0.5$, FL convergence keeps slowing down. Since MNIST is a rather simple data set, it is shown to be less sensitive to $\gamma$. By comparing $G_{\epsilon}$ for $\gamma=0$ and $\gamma=0.3$ with respect to $K$, Fig. \ref{gamma2} shows that the effects of package loss can be compensated by increasing $K$ in proportion to $\gamma$. These results can be explained by the convergence analysis in Theorem \ref{estimate_G_epsilon}.

\subsection{Network scheduling}
\begin{figure}[!h]
\centering  
\subfigure[CIFAR data set]{
\label{optK1}
\includegraphics[width=0.23\textwidth]{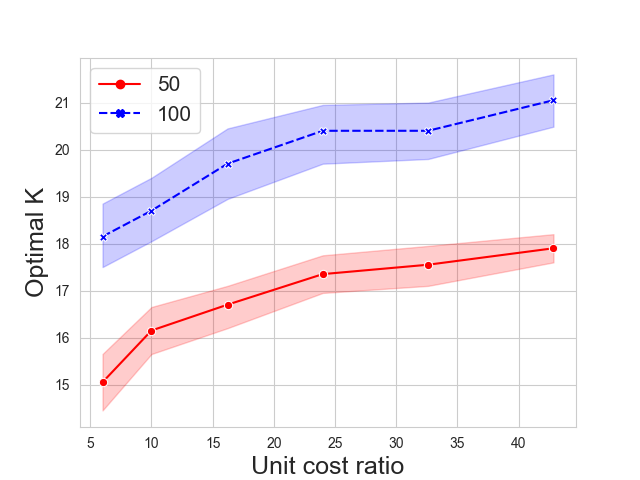}}
\subfigure[MNIST data set]{
\label{optK2}
\includegraphics[width=0.23\textwidth]{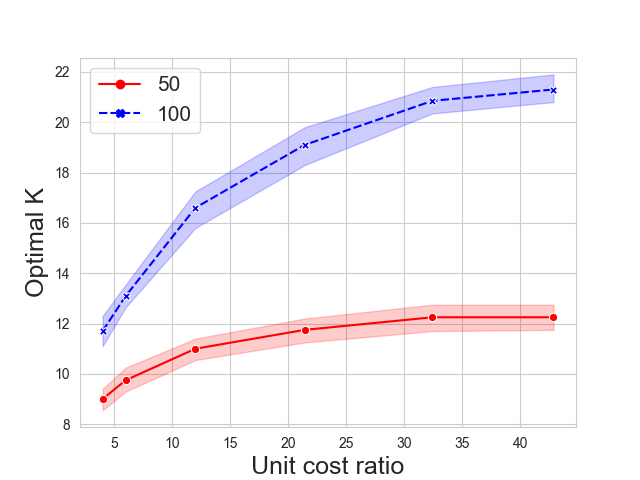}}
\caption{Optimal $K$ vs the cost ratio $\frac{E_{l}C_{n,0}}{C_{u,0}}$ in non-i.i.d. settings.}
\label{Fig.main}
\end{figure}
Based on the training loss for various $K$, we change the unit cost $C_{n,0}$ and $C_{u,0}$ to see the tendency of $K^{*}$ with the cost ratio $\frac{E_{l}C_{n,0}}{C_{u,0}}$ ($E_{l}=20$). In Fig. \ref{optK1} and Fig. \ref{optK2}, $K^{*}$ increases with the cost ratio. Meanwhile, both figures show that the curves of $K^{*}$ with $N=50$ are just below the curves with $N=100$. This is because each client in $N=50$ get more training data compared with that in $N=100$. Then by observing Theorem \ref{K_result}, $K^{*}$ will be reduced. Therefore, for the FL implementation over wireless networks, the system need to jointly consider the cost ratio and size of local data sets in order to have an efficient usage of limited bandwidth.

\begin{figure}[!h]
\centering  
\subfigure[$C_{u}+C_{n}$ vs $K$ ($E_{l}$=20)]{
\label{cost1}
\includegraphics[width=0.23\textwidth]{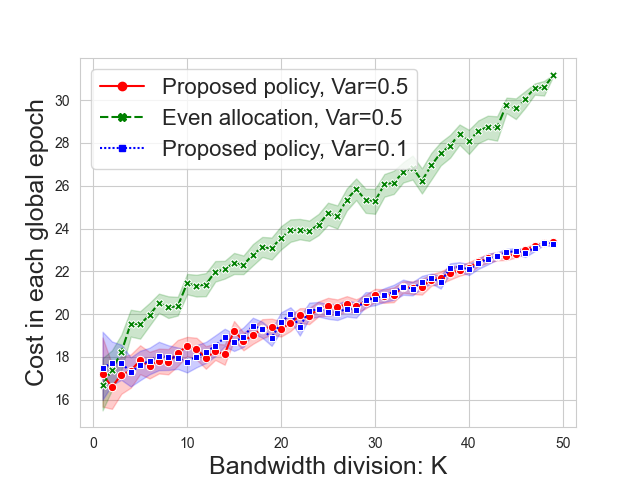}}
\subfigure[$C_{u}+C_{n}$ vs $E_{l}$ ($K$=10)]{
\label{cost2}
\includegraphics[width=0.23\textwidth]{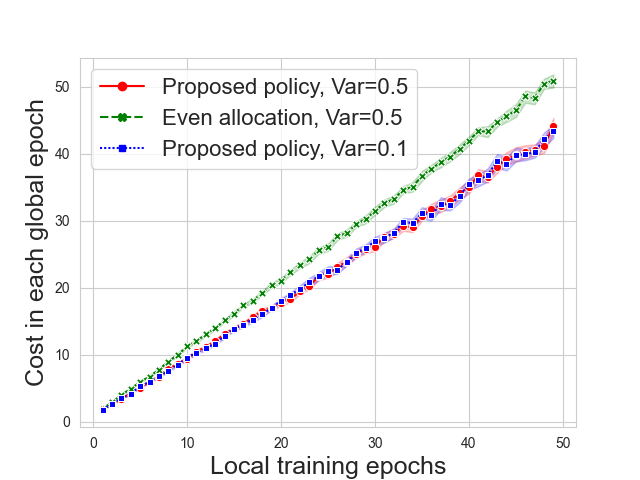}}
\caption{The system cost $C_{u}+C_{n}$ for different baseline settings.}
\label{Fig_cost}
\end{figure}
The system cost $C_{u}+C_{n}$ in each global epoch is shown in Fig. \ref{Fig_cost}. The curves are derived by the Monte Carlo method with independent simulations of $50$ epochs. Therein, the parameters for computation are set as $z_{m}=3\times 10^{4}$, $\alpha_{0}=5\times 10^{5}$ and $l_{0}=1$. Besides, $Var$ is selected to represent variations of the parameter settings among all clients with $0\leq Var <1$. Therein, $\kappa_{j}$, $p_{j}^{0}$, $D_{j}$ are derived uniformly in the range $[\bar{\kappa}(1-Var), \bar{\kappa}(1+Var)]$, $[\bar{p}^{0}(1-Var), \bar{p}^{0}(1+Var)]$ and $[D(1-Var), D(1+Var)]$ with $\bar{\kappa}=5\times 10^{-27}$, $\bar{p}^{0}=4\times 10^{-7}$ and $D=500$.

By observing the curves with respect to $K$ and $E_{l}$ in Fig. \ref{cost1} and Fig. \ref{cost2}, the former settings $C_{u}=KC_{u,0}$ and $C_{n}=E_{l}C_{n,0}$ can be directly validated. The evenly allocation policy in Fig. \ref{Fig_cost} refers to $a_{j}=\frac{1}{K}$ and $f_{j}=\bar{f}$. Compared with such a policy, the proposed network scheduling algorithm achieves a lower cost with less $C_{u,0}$ and $C_{n,0}$. Besides, by comparing the curves with $Var=0.1$ and $Var=0.5$, it is straightforward to see that the variation among clients can be handled well by the proposed policy without significant degradation in costs.

\section{Conclusion}
In this paper, we investigated the implementation of FL training over wireless networks. Considering a general FL training process on non-i.i.d. data sets, we jointly considered the coupling problem between communication and computing (training) in FL and presented some theoretical results in principle on the FL design. Therein, the inequalities were handled more elegantly to derive bounds with clearer physical meaning. Viewing latency and power consumption as a joint system cost, we discovered the tradeoff between FL convergence and system cost. The integrated design principles for FL implementation were also proposed in closed-form. Simulations on MNIST and CIFAR10 have demonstrated that the proposed convergence analysis and parameter selection principles have a good fit with the real recorded training loss. In particular, the work clearly pointed out how the wireless network and training settings would jointly influence FL convergence, which is meaningful for FL implementation in a wide range.



%

\appendices
\section{Preparations}\label{appendix_pre}
The standard FL discussed in Section \ref{FL_learning} chooses clients uniformly and averages the collected $w_{t}^{j}$ by weight $q_{j}$. In this paper, for simplicity, an equivalent process for model aggregation is adopted similarly as in \cite{li2019convergence,haddadpour2019convergence}, where clients are randomly selected in $P_{t}$ by probability $q_{j}$ and the global model is updated by
\begin{equation}\label{gb_up1}
\bar{w}_{t} = \frac{1}{K} \sum_{j \in P_{t}} w_{t}^{j}.
\end{equation}
This training process considers weight $q_{j}$ as the sampling probability in $P_{t}$ for client $j$ instead of the aggregation weight, which achieves an equivalent training as standard FL.
\subsection{Shorthand notations}
For simplicity, some shorthand notations are defined as follows.

$w_{t}^{j}$: The local model weight in client $j$ at time $t$.

$\widetilde{g}_{t}^{j}$: The local stochastic gradient of client $j$ at time $t$ due to random data $\xi$.

$D$: Average local data size among clients.

$g_{t}^{j}$: The expected gradient of client $j$ at time $t$. Note that $E_{\xi}(\widetilde{g}_{t}^{j})=g_{t}^{j}$.

$P_{t,\gamma}$: Set of clients successfully uploading models without package loss in set $P_{t}$. Its size is denoted as $|P_{t,\gamma}|=K_{\gamma}$.

$E[.]$ is the general expectation involving $E_{\xi}$ and $E_{P_{t,\gamma}}$. In the following proof, without specific explanation, $E[.]$ will be taken to denote expectations for short.

Based on these notations, some related terms of the global model and gradients are defined as follows.

Global aggregated model weight:
\begin{align}\label{gather_w}
\overline{w}_{t}=\frac{1}{K_{\gamma}}\sum_{j \in P_{t,\gamma}} w_{t}^{j}.
\end{align}

Global aggregated SGD gradient:
\begin{align}\label{gather_Sgd}
\widetilde{g}_{t}=\frac{1}{K_{\gamma}}\sum_{j \in P_{t,\gamma}} \widetilde{g}_{t}^{j}.
\end{align}

Expectation of global aggregated gradient:
\begin{align}\label{gather_gd}
\overline{g}_{t}=\frac{1}{K_{\gamma}}\sum_{j \in P_{t,\gamma}} g_{t}^{j}.
\end{align}

Update of aggregated global model:
\begin{align}\label{update_w}
\overline{w}_{t+1}=\overline{w}_{t}-\eta_{t}\widetilde{g}_{t}.
\end{align}

Note that $t$ in (\ref{gather_w}) (\ref{gather_Sgd}) (\ref{gather_gd}) can be an arbitrary time slot, not necessarily to be $t=t_{c}=\left \lfloor \frac{t}{E_{l}} \right \rfloor E_{l}$. The aggregation terms are derived from virtually aggregated values without affecting the local training, while the real aggregation with feedback only occurs at $t_{c}$.

\subsection{Assumptions of bounds on training gradients}
Some assumptions on the SGD gradient are given by referring to \cite{li2019convergence,haddadpour2019convergence}.
\begin{assumption}\label{gradient_variance}
The stochastic gradient on the local data set suffers a variance upper-bounded by
\begin{align}
E[ || \widetilde{g}_{t}^{j}-g_{t}^{j} ||^{2} ]  \leq \frac{C_{1} || g_{t}^{j} ||^{2}+\sigma^{2}}{D}l_{j},
\end{align}
where $C_{1}$ is the stochastic coefficient of the variation of gradients and $\sigma^{2}$ is the variance of noise in sampling. The gradient variance is inversely proportional to the local data size $D_{j}=\frac{D}{l_{j}}$, where $D$ is the average data size and $l_{j}$ is the specific ratio for client $j$. Taking client weight $q_{j}$ as $q_{j} \propto D_{j}$, there is
\begin{align}\label{re_weight}
\sum_{j=1}^{n}q_{j}l_{j}=1.
\end{align}
\end{assumption}

\begin{assumption}\label{gradient_bound}
Given $\{\pi_{j}\}$ as general client weights with $\sum_{j=1}^{n}\pi_{j}=1$ and $\lambda$ in Definition \ref{define_noniid},
the weighted gradients are supposed to be upper-bounded by
\begin{align}
C_{1}|| \sum_{j=1}^{N}\pi_{j}g_{k}^{j} ||^{2}+\frac{\sigma^{2}}{\lambda} \leq G^{2},
\end{align}
where $\sum_{j=1}^{N}\pi_{j}g_{k}^{j}$ is a weighted aggregation of the local model.
\end{assumption}
Under Assumption \ref{gradient_bound}, $C_{1}|| \sum_{j=1}^{N}\pi_{j}g_{k}^{j} ||^{2}$ is upper-bounded by
\begin{equation}\label{phi0}
C_{1}|| \sum_{j=1}^{N}\pi_{j}g_{k}^{j} ||^{2} \leq \frac{G^{2}}{\phi_{0}},
\end{equation}
where $\phi_{0}$ is a parameter related to the ratio between $G^{2}$ and the weighted gradients, which is affected by $\sigma^{2}$.

\subsection{A lemma for summation in random set $P_{t,\gamma}$}
\begin{lemma}\label{p_t_gamma}
Given $P_{t}$ with $|P_{t}|=K$ as set of clients for model aggregation, $P_{t,\gamma}$ with $|P_{t,\gamma}|=K_{\gamma}$ is the set of the actually received model by central server. Suppose that the package loss rate is i.i.d. in a wireless network with the uniform worst case $K_{\gamma}\geq K(1-\gamma)$, then the summation in random set $P_{t,\gamma}$ satisfies the following equations:
\begin{equation}\label{lemma_summation}
E_{P_{t,\gamma}}[\frac{1}{K_{\gamma}}\underset{j\in P_{t,\gamma}}{\sum}x_{j}] = \sum_{j=1}^{N}q_{j}x_{j},
\end{equation}
\begin{equation}\label{lemma_summation_2}
E_{P_{t,\gamma}}[\frac{1}{K_{\gamma}^{2}}\underset{j\in P_{t,\gamma}}{\sum}x_{j}] \leq \frac{1}{K(1-\gamma)} \sum_{j=1}^{N}q_{j}x_{j},
\end{equation}
where $q_{j}$ is the probability of choosing client $j$ in $P_{t}$ and $x_{j}$ denotes an arbitrary random variable from client $j$.
\end{lemma}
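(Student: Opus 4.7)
The plan is to exploit the equivalent reformulation stated at the beginning of Appendix~\ref{appendix_pre}, in which clients entering $P_{t}$ are drawn i.i.d.\ with probabilities $\{q_{j}\}$. Combining this with the assumption that package loss acts independently on each slot of $P_{t}$, the key structural observation is that, conditional on $K_{\gamma}=m$, the identities of the $m$ surviving clients in $P_{t,\gamma}$ are themselves i.i.d.\ draws from $\{1,\ldots,N\}$ with distribution $\{q_{j}\}$. Both claims then reduce to conditional moment computations on $K_{\gamma}$ followed by an outer expectation.

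To make this rigorous, I would introduce the auxiliary random variables $J_{1},\ldots,J_{K}$ (the i.i.d., $q_{j}$-distributed client indices filling the $K$ slots of $P_{t}$) and $Y_{1},\ldots,Y_{K}$ (i.i.d.\ Bernoulli$(1-\gamma)$ survival indicators, independent of $\{J_{i}\}$). Then $\sum_{j\in P_{t,\gamma}}x_{j}=\sum_{i=1}^{K}Y_{i}x_{J_{i}}$ and $K_{\gamma}=\sum_{i=1}^{K}Y_{i}$. Conditioning on $(Y_{1},\ldots,Y_{K})$ and using independence of the $J_{i}$'s,
\begin{align*}
E\!\left[\sum_{i=1}^{K}Y_{i}x_{J_{i}}\,\Big|\,Y_{1},\ldots,Y_{K}\right] = K_{\gamma}\sum_{j=1}^{N}q_{j}x_{j}.
\end{align*}
Dividing by $K_{\gamma}$ (on the event $\{K_{\gamma}>0\}$) and taking the outer expectation immediately yields (\ref{lemma_summation}), since the conditional mean is already free of the $Y_{i}$'s.

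For (\ref{lemma_summation_2}), dividing the same identity by $K_{\gamma}^{2}$ gives
\begin{align*}
E\!\left[\frac{1}{K_{\gamma}^{2}}\sum_{j\in P_{t,\gamma}}x_{j}\,\Big|\,K_{\gamma}\right] = \frac{1}{K_{\gamma}}\sum_{j=1}^{N}q_{j}x_{j}.
\end{align*}
Invoking the uniform worst-case hypothesis $K_{\gamma}\geq K(1-\gamma)$, so that $1/K_{\gamma}\leq 1/(K(1-\gamma))$ almost surely, and noting that the $x_{j}$ appearing in the applications of the lemma are non-negative (they are squared gradient quantities in the main proof), the inequality is preserved under the outer expectation and delivers the stated bound.

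The main obstacle, such as it is, lies in pinning down the edge case $K_{\gamma}=0$ together with the precise reading of the \emph{uniform worst case} assumption. Under the deterministic interpretation $K_{\gamma}\geq K(1-\gamma)$ almost surely, which is how the paper appears to use it, both divisions by $K_{\gamma}$ are well-defined and the argument is clean. If instead the bound were to be read in a probabilistic sense, then a concentration argument on $\sum_{i}Y_{i}$ (for example a Chernoff bound) would be required to control the event $\{K_{\gamma}<K(1-\gamma)\}$ and to dominate $E[1/K_{\gamma}]$ up to a vanishing correction; since the paper adopts the worst-case formulation, I would simply flag this as the implicit regularity assumption and proceed with the clean deterministic bound.
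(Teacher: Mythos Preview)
Your proposal is correct and follows essentially the same route as the paper: both arguments condition on the survival pattern (equivalently on $K_{\gamma}$), use that each surviving slot is an i.i.d.\ draw with law $\{q_{j}\}$ so that the inner expectation equals $K_{\gamma}\sum_{j}q_{j}x_{j}$, and then take the outer expectation with $f_{0}(K_{\gamma})K_{\gamma}$ equal to $1$ or bounded via $K_{\gamma}\geq K(1-\gamma)$. Your explicit introduction of the auxiliary variables $J_{i},Y_{i}$ and your remarks on the nonnegativity of $x_{j}$ and the $K_{\gamma}=0$ edge case make the argument cleaner, but the underlying decomposition is identical to the paper's.
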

\begin{proof}
Setting $f_{0}(K_{\gamma})$ to be $\frac{1}{K_{\gamma}}$ or $\frac{1}{K_{\gamma}^{2}}$, then the summation can be deduced as follows.
\begin{align}\label{f2}
   E_{P_{t,\gamma}}[f_{0}(K_{\gamma}) \underset{j\in P_{t,\gamma}}{\sum}x_{j}] &=E_{P_{t}}E_{\gamma}[f_{0}(K^{\gamma}) \underset{j\in P_{t}^{\gamma}}{\sum}x_{j}]\notag\\
   &\overset{\textcircled{1}}{=} E_{\gamma}E_{P_{t}}[f_{0}(K^{\gamma}) \underset{j\in P_{t}^{\gamma}}{\sum}x_{j}]\notag\\
   &=E_{\gamma}[f_{0}(K^{\gamma})E_{P_{t}}[\underset{j\in P_{t}^{\gamma}}{\sum}x_{j}]]\notag\\
   &=E_{\gamma}[f_{0}(K^{\gamma}) \sum_{l=1}^{K^{\gamma}}E_{P_{t}}[x_{j_{l}}]]\notag\\
   &\overset{\textcircled{2}}{=}E_{\gamma}[f_{0}(K^{\gamma})K^{\gamma}\sum_{j=1}^{N}q_{j}x_{j}]\notag\\
   &=\sum_{j=1}^{N}q_{j}x_{j}E_{\gamma}[f_{0}(K^{\gamma})K^{\gamma}],
\end{align}
where $\textcircled{1}$ is due to the assumption that $\gamma$ is i.i.d. among the clients. By focusing on the worst case $K_{\gamma}\geq K(1-\gamma)$, such an assumption is reasonable. $\textcircled{2}$ is due to the assumption that client $j$ is selected by probability $q_{j}$. By taking in $f_{0}(K_{\gamma})=\frac{1}{K_{\gamma}}$ or $f_{0}(K_{\gamma})=\frac{1}{K_{\gamma}^{2}}$, the proof is completed.
\end{proof}

\section{Lemma \ref{g_t}}\label{proof_g_t}
\begin{lemma}\label{g_t}
Suppose that clients are selected in $P_{t}$ by probability $\{q_{j}\}$, under Assumption \ref{gradient_variance}, Definition \ref{define_noniid}, (\ref{lemma_summation_2}), (\ref{gather_Sgd}) and (\ref{gather_gd}), then $E|| \widetilde{g}_{t} ||^{2}$ is upper-bounded by
\begin{align}
    &E[ || \widetilde{g}_{t} ||^{2}  ] \leq \notag\\
    &\frac{\lambda C_{1}}{DK(1-\gamma)} \sum_{j=1}^{N}|| q_{j}l_{j}g_{t}^{j} ||^{2}
    +\frac{1}{K(1-\gamma)}\frac{\sigma^{2}}{D}+\lambda \sum_{j=1}^{N}|| q_{j}g_{t}^{j} ||^{2}.
\end{align}
\end{lemma}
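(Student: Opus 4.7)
The plan is to apply a bias-variance decomposition for the aggregated stochastic gradient, handle the per-client minibatch noise and the random client subset $P_{t,\gamma}$ separately, and then invoke the non-i.i.d. bound from Definition \ref{define_noniid} at the very end to convert sums of squared gradient norms into the weighted aggregate expressions on the right-hand side.

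First I would write $\widetilde{g}_t = (\widetilde{g}_t - \overline{g}_t) + \overline{g}_t$ and take the expectation over the per-client SGD noise $\xi$ conditional on the selected set $P_{t,\gamma}$. Because $E_\xi[\widetilde{g}_t^j] = g_t^j$ and the local stochastic gradients are drawn independently across clients, the cross term vanishes and $E_\xi\|\widetilde{g}_t\|^2 = \frac{1}{K_\gamma^2}\sum_{j \in P_{t,\gamma}} E_\xi\|\widetilde{g}_t^j - g_t^j\|^2 + \|\overline{g}_t\|^2$. Assumption \ref{gradient_variance} then upper-bounds each inner variance by $(C_1\|g_t^j\|^2 + \sigma^2)l_j/D$, so the variance term splits cleanly into one piece proportional to $\sum_{j \in P_{t,\gamma}} l_j\|g_t^j\|^2$ and one piece proportional to $\sum_{j \in P_{t,\gamma}} l_j$.

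Next I would take the remaining expectation $E_{P_{t,\gamma}}$. Applying inequality (\ref{lemma_summation_2}) from Lemma \ref{p_t_gamma} with $x_j = l_j\|g_t^j\|^2$ and $x_j = l_j$ converts those two $P_{t,\gamma}$-sums into $\frac{1}{K(1-\gamma)}\sum_{j=1}^N q_j l_j\|g_t^j\|^2$ and $\frac{1}{K(1-\gamma)}\sum_{j=1}^N q_j l_j$; the normalization identity (\ref{re_weight}) collapses the latter to $\frac{1}{K(1-\gamma)}$, which lands the $\sigma^2/(DK(1-\gamma))$ term exactly as required. For the $\|\overline{g}_t\|^2$ piece, I would first apply Jensen's inequality to obtain $\|\overline{g}_t\|^2 \leq \frac{1}{K_\gamma}\sum_{j \in P_{t,\gamma}}\|g_t^j\|^2$ and then use Lemma \ref{p_t_gamma} once more to land at $\sum_{j=1}^N q_j\|g_t^j\|^2$. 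Finally, Definition \ref{define_noniid} applied with weights $\pi_j = q_j l_j$ on the variance piece (using $\sum_j q_j l_j = 1$) and $\pi_j = q_j$ on the expectation piece introduces the factor $\lambda$ and produces the target expressions.

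The main obstacle is bookkeeping rather than any single sharp estimate. The argument must keep two independent sources of randomness straight (client-side SGD noise and server-side set selection with i.i.d. package loss), manage three different weightings ($q_j$, $l_j$, and the product $q_j l_j$) whose normalization comes from (\ref{re_weight}), and apply a one-sided non-i.i.d. inequality in the direction that actually matches the target. The delicate step is ordering the conditional expectations so that the cross term really does vanish, and then choosing the weights in Definition \ref{define_noniid} at each use so that they align with the coefficients appearing in the stated bound.
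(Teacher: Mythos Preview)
Your proposal is correct and follows essentially the same route as the paper: the same bias--variance split $E_\xi\|\widetilde{g}_t\|^2 = \frac{1}{K_\gamma^2}\sum_{j\in P_{t,\gamma}}E_\xi\|\widetilde{g}_t^j-g_t^j\|^2 + \|\overline{g}_t\|^2$ conditional on $P_{t,\gamma}$, Jensen on $\|\overline{g}_t\|^2$, Assumption~\ref{gradient_variance} on the per-client variances, Lemma~\ref{p_t_gamma} to pass to the $q_j$-weighted sums (with (\ref{re_weight}) collapsing the $\sigma^2$ term), and finally Definition~\ref{define_noniid} with weights $q_jl_j$ and $q_j$ to introduce $\lambda$. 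The only differences are cosmetic ordering of the pieces.
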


{\textbf{Proof: }}

It is natural to see that $\bar{g}_{t}=E[\widetilde{g}_{t}]$. Due to the basic properties of expectation, $E[|| \widetilde{g}_{t} ||^{2}]$ can be transformed as
\begin{align}\label{g_t_1}
    E[|| \widetilde{g}_{t} ||^{2}]&=E[|| \widetilde{g}_{t}-E[\widetilde{g}_{t}] ||^{2}]+|| E[\widetilde{g}_{t}] ||^{2}\notag\\
    &=E[|| \widetilde{g}_{t}-\bar{g}_{t} ||^{2}]+|| \bar{g}_{t} ||^2.
\end{align}
$|| \bar{g}_{t} ||^2$ in (\ref{g_t_1}) can be upper-bounded as follows due to Jensen's inequality.
\begin{align}
|| \bar{g}_{t} ||^2 = || \frac{1}{K_{\gamma}}\underset{j \in P_{t,\gamma}}{\sum} g_{t}^{j} ||^{2} \leq \frac{1}{K_{\gamma}} \underset{j \in P_{t,\gamma}}{\sum} || g_{t}^{j} ||^{2}.
\end{align}

Under Assumption \ref{gradient_variance}, considering expectation $E_{\xi}[.]$ on the data sets, there is the following upper bound on $E[|| \widetilde{g}_{t}-\bar{g}_{t} ||^{2}]$ in (\ref{g_t_1}).
\begin{align}
    E_{\xi }[\left \| \widetilde{g}_{t}-\bar{g}_{t} \right \|^{2}]
    &= E_{\xi }[|| \frac{1}{K_{\gamma}} \underset{j \in P_{t,\gamma}}{\sum} \widetilde{g}_{t}^{j}- \frac{1}{K_{\gamma}} \underset{j \in P_{t,\gamma}}{\sum} g_{t}^{j} ||^{2}]\notag\\
    &=\frac{1}{K_{\gamma}^{2}} E_{\xi }[\underset{j \in P_{t,\gamma}}{\sum} \left \| \widetilde{g}_{t}^{j}-g_{t}^{j} \right \|^{2} \notag\\& + \underset{i \neq j}{\sum}<\widetilde{g}_{t}^{i}-g_{t}^{i} , \widetilde{g}_{t}^{j}-g_{t}^{j}> ]\notag\\
    &\overset{\textcircled{1}}{=}E_{\xi }[\frac{1}{K_{\gamma }^{2}}\underset{j \in P_{t,\gamma}}{\sum} || \widetilde{g}_{t}^{j}-g_{t}^{j} ||^{2}]\notag\\
    &\leq \frac{1}{K_{\gamma }^{2}}\underset{j \in P_{t,\gamma}}{\sum}l_{j}( \frac{C_{1}}{D}||  g_{t}^{j} ||^{2}+\frac{\sigma^{2}}{D}),
\end{align}
where $\textcircled{1}$ is from the common sense that the random local data sampling is independent among clients. That is, the error $\widetilde{g}_{t}^{i}-g_{t}^{i}$ resulting from local sampling in client $i$ is independent from $\widetilde{g}_{t}^{j}-g_{t}^{j}$ in client $j$. Note that the gradients themselves are not independent among clients, though their errors are in fact independent. Then from (\ref{re_weight}) and Lemma \ref{p_t_gamma}, there is
\begin{align}\label{g_bd}
    &E [ || \widetilde{g}_{t} ||^{2} ]\leq \notag\\
    & E_{P_{t,\gamma}}[ \frac{1}{K_{\gamma }^{2}} \underset{j \in P_{t,\gamma}}{\sum} \frac{C_{1}}{D}||  g_{t}^{j} ||^{2}+\frac{1}{K_{\gamma}} \underset{j \in P_{t,\gamma}}{\sum} || g_{t}^{j} ||^{2} +\frac{1}{K_{\gamma}}\frac{\sigma^{2}}{D} ] \leq \notag\\
    &\frac{C_{1}}{DK(1-\gamma)} \sum_{j=1}^{N}q_{j}l_{j}|| g_{t}^{j} ||^{2}
    +\frac{1}{K(1-\gamma)}\frac{\sigma^{2}}{D}+\sum_{j=1}^{N}q_{j}|| g_{t}^{j} ||^{2}.
\end{align}
Taking in metric $\lambda$ from Definition \ref{define_noniid}, the proof is completed.

\section{Lemma \ref{cross_bd}}\label{proof_cross}
\begin{lemma}\label{cross_bd}
Under Assumption \ref{gradient_variance}), \ref{L_smooth}), Definition \ref{define_noniid}, (\ref{gather_gd}), (\ref{gather_Sgd}), (\ref{gather_w}) and (\ref{update_w}), suppose clients are selected associated with weight $\{q_{j}\}$ and $\eta_{t}$ diminishing by $O(\frac{1}{t})$, then $-\eta_{t}E\left [ <\bigtriangledown f(\bar{w_{t}}), \widetilde{g}_{t}> \right ]$ is upper-bounded as follows.
\begin{align}\label{cross_bound}
    &-\eta_{t}E\left [ <\bigtriangledown f(\bar{w_{t}}), \widetilde{g}_{t}> \right ]\leq
    -\frac{1}{2}|| \bigtriangledown f(\bar{w_{t}}) ||^{2}-\frac{1}{2}|| \sum_{j=1}^{N}q_{j}g_{t}^{j} ||^{2} \notag\\
    &+\eta_{t}^{2}L^{2}\lambda\frac{\lambda-1}{K(1-\gamma)}\frac{C_{1}}{2D}\sum_{k=t_{c}+1}^{t_{c}+E_{l}}||\sum_{j=1}^{N}q_{j}l_{j} g_{k}^{j} ||^{2}+ \notag\\
    &\frac{E_{l}\eta_{t}^{2}L^{2}\sigma^{2}}{2D}\frac{\lambda-1}{K(1-\gamma)}
    +\frac{\lambda-1}{2}E_{l}\lambda L^{2} \eta_{t}^{2}\sum_{k=t_{c}+1}^{t_{c}+E_{l}}||\sum_{j=1}^{N} q_{j} g_{k}^{j} ||^{2}.
\end{align}
\end{lemma}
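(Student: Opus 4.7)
The plan is to use the polarization identity to convert the cross term into a sum of squared norms, then control the residual by the client-drift that accumulates across the local epoch $[t_c, t_c+E_l]$. First, I would take expectation over the SGD noise $\xi$ and the loss-filtered client selection $P_{t,\gamma}$. Since the local sampling errors $\widetilde g_t^j - g_t^j$ are independent across clients and zero-mean, and since Lemma \ref{p_t_gamma} gives $E_{P_{t,\gamma}}[\bar g_t] = \sum_j q_j g_t^j$, the inner product reduces to
\begin{align}
-\eta_t E\langle \nabla f(\bar w_t),\widetilde g_t\rangle \;=\; -\eta_t \langle \nabla f(\bar w_t), \textstyle\sum_j q_j g_t^j\rangle.
\end{align}
Applying the identity $-\langle a,b\rangle = \tfrac{1}{2}\|a-b\|^2-\tfrac{1}{2}\|a\|^2-\tfrac{1}{2}\|b\|^2$ (with $\eta_t$ absorbed into the residual, as is standard in the FedAvg-style analyses of \cite{li2019convergence,haddadpour2019convergence}) yields the two negative terms $-\tfrac{1}{2}\|\nabla f(\bar w_t)\|^2$ and $-\tfrac{1}{2}\|\sum_j q_j g_t^j\|^2$ displayed on the right-hand side, plus a residual
\begin{align}
R_t \;=\; \tfrac{1}{2}\bigl\|\nabla f(\bar w_t)-\textstyle\sum_j q_j g_t^j\bigr\|^2.
\end{align}

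Second, I would bound $R_t$ by the client-drift. Writing $\nabla f(\bar w_t) = \sum_j q_j \nabla f_j(\bar w_t)$ and using Jensen's inequality and Assumption \ref{L_smooth} on each $f_j$,
\begin{align}
R_t \;\leq\; \tfrac{L^2}{2}\sum_{j} q_j \,\|\bar w_t - w_t^j\|^2 .
\end{align}
The drift $\bar w_t - w_t^j$ is the partial sum $-\sum_{k=t_c}^{t-1}\eta_k(\widetilde g_k - \widetilde g_k^j)$ of the local updates since the last synchronization, so Cauchy-Schwarz across the at most $E_l$ local steps and passing to expectation brings in an $E_l\eta_t^2$ prefactor times $\sum_{k=t_c+1}^{t_c+E_l}E\|\widetilde g_k - \widetilde g_k^j\|^2$. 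Splitting this into a variance piece (handled by Assumption \ref{gradient_variance}, which supplies the $\tfrac{\sigma^2}{D}$ and $\tfrac{C_1}{D}\|g_k^j\|^2$ terms) and a ``gradient diversity'' piece (a weighted sum of $\|g_k^j\|^2$) will produce exactly the four summands on the right-hand side.

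Third, I would collapse the client-indexed sums using Definition \ref{define_noniid}. Weighted sums of the form $\sum_j q_j\|g_k^j\|^2$ and $\sum_j q_j l_j \|g_k^j\|^2$ are upper-bounded by $\lambda\,\|\sum_j q_j g_k^j\|^2$ and $\lambda\,\|\sum_j q_j l_j g_k^j\|^2$ respectively, which is where the factor $\lambda$ in the last two summands comes from. The extra $(\lambda-1)$ factor (rather than $\lambda$) appears because the $\lambda\|{\sum_j q_j g_k^j}\|^2$ portion of the bound can be reassigned to the $-\tfrac{1}{2}\|\sum_j q_j g_t^j\|^2$ term after invoking the telescoping structure $\sum_{k=t_c+1}^{t_c+E_l}$, leaving behind only the excess $(\lambda-1)$ contribution. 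Simultaneously, applying the second inequality of Lemma \ref{p_t_gamma} to the variance piece produces the factor $\tfrac{1}{K(1-\gamma)}$.

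The main obstacle will be the third step: obtaining exactly the stated $(\lambda-1)$ coefficients rather than a looser $\lambda$ requires a careful bookkeeping in which the ``diagonal'' contribution (what would remain under i.i.d.\ data with $\lambda=1$) is accounted to the already-present $-\tfrac{1}{2}\|\sum_j q_j g_t^j\|^2$ term, and only the ``off-diagonal'' excess is charged to the drift bound. Getting this cancellation right, while also tracking the two distinct scalings $\tfrac{\lambda-1}{K(1-\gamma)}$ (which comes from packet loss plus local variance) and $\tfrac{\lambda-1}{2}$ (which comes purely from client heterogeneity of deterministic gradients), is the delicate part that drives the improved tightness of the bound advertised earlier in the paper.
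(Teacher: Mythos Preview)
Your first two steps---reducing the inner product via Lemma~\ref{p_t_gamma}, applying the polarization identity, and bounding the residual by $\tfrac{L^{2}}{2}\sum_{j}q_{j}\|\bar w_{t}-w_{t}^{j}\|^{2}$ through Jensen and $L$-smoothness---match the paper's argument exactly.

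The gap is in your third step: the mechanism you propose for producing the $(\lambda-1)$ factors will not work. You suggest bounding the drift by Cauchy--Schwarz, then using Definition~\ref{define_noniid} to get a $\lambda$ coefficient, and finally ``reassigning'' the unit part of that $\lambda$ to the negative term $-\tfrac{1}{2}\|\sum_{j}q_{j}g_{t}^{j}\|^{2}$. But that negative term lives at the single time index $t$, carries no $\eta_{t}^{2}L^{2}E_{l}$ prefactor, and is not summed over $k$; it cannot absorb a sum of $E_{l}$ positive contributions of order $\eta_{t}^{2}L^{2}$. There is also no telescoping structure here---the $k$-sum is a sum of nonnegative quantities, not of successive differences.

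What the paper actually does is extract $(\lambda-1)$ \emph{before} any Cauchy--Schwarz step, by a variance identity on the drift itself. Writing $X_{j}=\sum_{k}\eta_{k}\widetilde g_{k}^{j}$ for the accumulated local update of client $j$, the averaged drift over $P_{t,\gamma}$ satisfies
\[
E\Bigl[\tfrac{1}{K_{\gamma}}\!\!\sum_{j\in P_{t,\gamma}}\!\!\|\bar w_{t}-w_{t}^{j}\|^{2}\Bigr]
= E\Bigl[\tfrac{1}{K_{\gamma}}\!\!\sum_{j}\|X_{j}\|^{2}\Bigr]
- E\Bigl[\bigl\|\tfrac{1}{K_{\gamma}}\!\!\sum_{j}X_{j}\bigr\|^{2}\Bigr],
\]
and Definition~\ref{define_noniid} (applied to the accumulated gradients $X_{j}$ with weights induced by $P_{t,\gamma}$) gives $E[\tfrac{1}{K_{\gamma}}\sum_{j}\|X_{j}\|^{2}]\leq \lambda\,E[\|\tfrac{1}{K_{\gamma}}\sum_{j}X_{j}\|^{2}]$. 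Subtracting yields the clean bound $(\lambda-1)\,E[\|\tfrac{1}{K_{\gamma}}\sum_{j}X_{j}\|^{2}]$ in one stroke. Only \emph{after} this does the paper split into an SGD-noise piece (which picks up $\tfrac{1}{K(1-\gamma)}$ from Lemma~\ref{p_t_gamma}) and a mean piece (which picks up an additional $\lambda$ from a second use of Definition~\ref{define_noniid} after Jensen over $k$ and $j$). That is why both families of positive terms carry the common $(\lambda-1)$ prefactor and why the two scalings $\tfrac{\lambda-1}{K(1-\gamma)}$ and $\tfrac{(\lambda-1)E_{l}\lambda}{2}$ emerge naturally.
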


{\textbf{Proof: }}

From the basic definitions (\ref{gather_gd}), (\ref{gather_Sgd}), (\ref{gather_w}) and (\ref{update_w}), $-E [ <\bigtriangledown f(\bar{w_{t}}), \widetilde{g}_{t}>  ]$ can be transformed as follows.
\begin{align}\label{et_cross}
    &-E [ <\bigtriangledown f(\bar{w_{t}}), \widetilde{g}_{t}>  ]
    =-<\bigtriangledown f(\bar{w_{t}}) , E[\frac{1}{K^{\gamma}} \underset{j \in P_{t}^{\gamma}}{\sum} \widetilde{g}_{t}^{j} ]>\notag\\
    &\overset{\textcircled{1}}{=} -<\bigtriangledown f(\bar{w_{t}}) , \sum_{j=1}^{N}q_{j}g_{t}^{j}>
    \overset{\textcircled{2}}{=} -\frac{1}{2}|| \bigtriangledown f(\bar{w_{t}}) ||^{2} + \notag\\&\frac{1}{2}|| \bigtriangledown f(\bar{w_{t}})-\sum_{j=1}^{N}q_{j}g_{t}^{j} ||^{2} -\frac{1}{2}|| \sum_{j=1}^{N}q_{j}g_{t}^{j} ||^{2} \notag\\&= -\frac{1}{2}|| \bigtriangledown f(\bar{w_{t}}) ||^{2} + \frac{1}{2}|| \sum_{j=1}^{N} q_{j}(\bigtriangledown f_{j}(\bar{w_{t}})-\bigtriangledown f_{j}(w_{t}^{j})) ||^{2}
    \notag\\&-\frac{1}{2}|| \sum_{j=1}^{N}q_{j}g_{t}^{j} ||^{2}
    \overset{\textcircled{3}}{\leq}-\frac{1}{2}[|| \bigtriangledown f(\bar{w_{t}}) ||^{2}+|| \sum_{j=1}^{N}q_{j}g_{t}^{j} ||^{2}] \\&+ \frac{L^{2}}{2}\sum_{j=1}^{N}q_{j}|| \bar{w_{t}}-w_{t}^{j} ||^{2},
\end{align}
where $\textcircled{1}$ is from Lemma \ref{p_t_gamma} and $\textcircled{2}$ is from the common equation $ab=\frac{a^{2}+b^{2}-(a-b)^{2}}{2}$. $\textcircled{3}$ is due to the L-smooth properties of loss function and Jensen's inequality.

Next, the term $E[\sum_{j=1}^{N}q_{j}|| \bar{w}_{t}-w_{t}^{j} ||^{2}]$ therein will be estimated. In FL, model aggregation occurs in every $E_{l}$ steps, with its corresponding time slot is $t_{c}=\left \lfloor \frac{t}{E_{l}} \right \rfloor E_{l}$. From (\ref{gather_gd}), (\ref{gather_Sgd}), (\ref{gather_w}) and (\ref{update_w}), $\bar{w}_{t}$ and distributed $w_{t}^{j}$ in one local epoch are updated as follows.
\begin{align}
    &\bar{w}_{t_{c}}=\frac{1}{K_{\gamma}} \underset{j \in P_{t,\gamma}}{\sum}w_{t_{c}}^{j},\label{up_1}\\
    &w_{t}^{j}=\bar{w}_{t_{c}}-\sum_{k=t_{c}+1}^{t-1}\eta_{k}\widetilde{g}_{k}^{j},\label{up_2}\\
    &\bar{w}_{t}=\bar{w}_{t_{c}}-\frac{1}{K_{\gamma}} \underset{j \in P_{t,\gamma}}{\sum} \sum_{k=t_{c}+1}^{t-1} \eta_{k}\widetilde{g}_{k}^{j}.\label{up_3}
\end{align}
Note again that $\bar{w}_{t}$ is the virtually aggregated model weight for an arbitrary $t$. The real model aggregation and model broadcast only occurs at $t_{c}$.
Then due to (\ref{up_1}), (\ref{up_2}) and (\ref{up_3}), there is
\begin{align}\label{w_tup}
    &E[ \frac{1}{K_{\gamma}}\underset{j \in P_{t,\gamma}}{\sum}|| \bar{w}_{t}-w_{t}^{j} ||^{2} ] \notag\\& = E [\frac{1}{K_{\gamma}}\underset{j \in P_{t,\gamma}}{\sum} || \bar{w}_{t_{c}}- \frac{1}{K_{\gamma}}\underset{l \in P_{t,\gamma}}{\sum} \sum_{k=t_{c}+1}^{t-1} \eta_{k}\widetilde{g}_{k}^{l}-\bar{w}_{t_{c}}\notag\\&+\sum_{k=t_{c}+1}^{t-1}\eta_{k}\widetilde{g}_{k}^{j} ||^{2} ]=\notag\\
    &E [\frac{1}{K_{\gamma}}\underset{j \in P_{t,\gamma}}{\sum} || \sum_{k=1}^{r}\eta_{t_{c}+k}\widetilde{g}_{t_{c}+k}^{j}-\frac{1}{K^{\gamma}}\underset{l \in P_{t,\gamma}}{\sum} \sum_{k=1}^{r} \eta_{t_{c}+k}\widetilde{g}_{t_{c}+k}^{l} ||^{2}  ]\notag\\
    &\overset{\textcircled{1}}{=}E [\frac{1}{K_{\gamma}}\underset{j \in P_{t,\gamma}}{\sum} ||\sum_{k=1}^{r}\eta_{t_{c}+k}\widetilde{g}_{t_{c}+k}^{j}||^{2}] - E[|| \frac{1}{K^{\gamma}}\underset{j \in P_{t,\gamma}}{\sum}\notag\\& \sum_{k=1}^{r} \eta_{t_{c}+k}\widetilde{g}_{t_{c}+k}^{j} ||^{2}],
\end{align}
where $\textcircled{1}$ is due to $E|| x-E(x) ||^{2}=E|| x ||^{2}- [ E(x) ]^{2}$. The super index $l$ is applied to differ from $j$, they both represent mark for a client.

In (\ref{w_tup}), $\sum_{k=1}^{r}\eta_{t_{c}+k}\widetilde{g}_{t_{c}+k}^{j}$ is the accumulated gradient, which is also a form of the local gradient $\bigtriangledown f_{j}(w)$. From (\ref{lemma_summation}) in Lemma \ref{p_t_gamma}, $E_{P_{t,\gamma}}[\frac{1}{K_{\gamma}}\underset{j\in P_{t,\gamma}}{\sum}x_{j}] = \sum_{j=1}^{N}q_{j}x_{j}$. Then $E [\frac{1}{K_{\gamma}}\underset{j \in P_{t,\gamma}}{\sum} ||\sum_{k=1}^{r}\eta_{t_{c}+k}\widetilde{g}_{t_{c}+k}^{j}||^{2}]$ and $E[|| \frac{1}{K^{\gamma}}\underset{j \in P_{t,\gamma}}{\sum} \sum_{k=1}^{r} \eta_{t_{c}+k}\widetilde{g}_{t_{c}+k}^{j} ||^{2}]$ can be separately represented as $\sum_{j=1}^{N}q_{j}|| \bigtriangledown f_{j}(w) ||^{2}$ and $|| \sum_{j=1}^{N}q_{j}\bigtriangledown f_{j}(w) ||^{2}$. From the non-i.i.d. metric in Definition \ref{define_noniid}, there is
\begin{align}\label{up_gradient}
E [\frac{1}{K_{\gamma}}\underset{j \in P_{t,\gamma}}{\sum}& ||\sum_{k=1}^{r}\eta_{t_{c}+k}\widetilde{g}_{t_{c}+k}^{j}||^{2}] \notag\\& \leq \lambda E[|| \frac{1}{K^{\gamma}}\underset{j \in P_{t,\gamma}}{\sum} \sum_{k=1}^{r} \eta_{t_{c}+k}\widetilde{g}_{t_{c}+k}^{j} ||^{2}].
\end{align}
Therefore, (\ref{w_tup}) leads to
\begin{align}\label{up_noniid}
E[ \frac{1}{K_{\gamma}}&\underset{j \in P_{t,\gamma}}{\sum}|| \bar{w}_{t}-w_{t}^{j} ||^{2} ]\notag\\& \leq (\lambda-1)E[|| \frac{1}{K^{\gamma}}\underset{j \in P_{t,\gamma}}{\sum} \sum_{k=1}^{r} \eta_{t_{c}+k}\widetilde{g}_{t_{c}+k}^{j} ||^{2}].
\end{align}

Then from $E|| x ||^{2}=E|| x-E(x) ||^{2}+ [ E(x) ]^{2}$ and $E [ \widetilde{g}_{t}^{j}  ]=g_{t}^{j}$, (\ref{up_noniid}) further leads to

  \begin{align}\label{w_t1}
    &E[ \frac{1}{K_{\gamma}}\underset{j \in P_{t,\gamma}}{\sum}|| \bar{w}_{t}-w_{t}^{j} ||^{2} ] \leq\notag\\
    &(\lambda-1)E [ || \frac{1}{K_{\gamma}}\underset{j \in P_{t,\gamma}}{\sum} \sum_{k=1}^{r} \eta_{t_{c}+k}\widetilde{g}_{t_{c}+k}^{j} \notag\\&- \frac{1}{K_{\gamma}}\underset{j \in P_{t,\gamma}}{\sum} \sum_{k=1}^{r} \eta_{t_{c}+k}g_{t_{c}+k}^{j} ||^{2} + || \frac{1}{K_{\gamma}}\underset{j \in P_{t,\gamma}}{\sum} \sum_{k=1}^{r} \eta_{t_{c}+k}g_{t_{c}+k}^{j} ||^{2} ]\notag\\
    &=(\lambda-1)E[ || \frac{1}{K_{\gamma}}\underset{j \in P_{t,\gamma}}{\sum} \sum_{k=1}^{r} \eta_{t_{c}+k} ( \widetilde{g}_{t_{c}+k}^{j}-g_{t_{c}+k}^{j}  ) ||^{2} \notag\\&+ || \frac{1}{K_{\gamma}}\underset{j \in P_{t,\gamma}}{\sum} \sum_{k=1}^{r} \eta_{t_{c}+k}g_{t_{c}+k}^{j} ||^{2} ]\notag\\
    &\overset{\textcircled{1}}{=}(\lambda-1)E[ \frac{1}{K_{\gamma }^{2}} \underset{j \in P_{t,\gamma}}{\sum} \sum_{k=1}^{r} \eta_{t_{c}+k}^{2}|| \widetilde{g}_{t_{c}+k}^{j}-g_{t_{c}+k}^{j} ||^{2} \notag\\
    &+|| \frac{1}{K_{\gamma}}\underset{j \in P_{t,\gamma}}{\sum} \sum_{k=1}^{r}\eta_{t_{c}+k}g_{t_{c}+k}^{j} ||^{2} ],
  \end{align}
where $\textcircled{1}$ holds because $E_{\xi}(\widetilde{g}_{t_{c}+k}^{j}-g_{t_{c}+k}^{j})=0$ and the gradient errors are independent among clients.

Then from $|| \frac{1}{m}\sum_{j=1}^{m}x_{i} ||^{2} \leq \frac{1}{m}\sum_{j=1}^{m}|| x_{i} ||^{2}$ (Jensen's inequality), Eq. (\ref{w_t1}) can be bounded as
\begin{align}\label{w_t2}
&\overset{\textcircled{1}}{\leq}(\lambda-1)E[ \frac{1}{K_{\gamma}^{2}}\underset{j \in P_{t,\gamma}}{\sum} \sum_{k=1}^{r}\eta_{t_{c}+k}^{2}|| \widetilde{g}_{t_{c}+k}^{j}-g_{t_{c}+k}^{j} ||^{2} \notag\\
&+\frac{r}{K_{\gamma }}\underset{j \in P_{t,\gamma}}{\sum} \sum_{k=1}^{r}\eta_{t_{c}+k}^{2}|| g_{t_{c}+k}^{j} ||^{2} ].
\end{align}

Then under Assumption \ref{gradient_variance}, considering expectation $E_{\xi}[.]$, (\ref{w_t2}) can be bounded as follows.
\begin{align}\label{w_t3}
&E_{\xi}[\frac{1}{K_{\gamma}}\underset{j \in P_{t,\gamma}}{\sum}|| \bar{w}_{t}-w_{t}^{j} ||^{2}]\notag\\
&\leq  (\lambda-1) [\frac{1}{K_{\gamma}^{2}}\underset{j \in P_{t,\gamma}}{\sum} \sum_{k=1}^{r}\eta_{t_{c}+k}^{2} l_{j}(\frac{C_{1}}{D}|| g_{t_{c}+k}^{j} ||^{2}+\frac{\sigma^{2}}{D} ) \notag\\
&+\frac{r}{K_{\gamma}}\underset{j \in P_{t,\gamma}}{\sum} \sum_{k=1}^{r}\eta_{t_{c}+k}^{2}|| g_{t_{c}+k}^{j} ||^{2} ].
\end{align}

Then taking expectation $E_{P_{t,\gamma}}[.]$ on both sides of (\ref{w_t3}) and applying Lemma \ref{p_t_gamma}, we have
\begin{align}\label{w_t5}
    &\sum_{j=1}^{N}q_{j}|| \bar{w}_{t}-w_{t}^{j} ||^{2} \leq \notag\\
    &(\lambda-1)[\frac{1}{K(1-\gamma)}\sum_{j=1}^{N}q_{j} \sum_{k=1}^{r}\eta_{t_{c}+k}^{2}l_{j}(\frac{C_{1}}{D}|| g_{t_{c}+k}^{j} ||^{2}+\frac{\sigma^{2}}{D} ) \notag\\
    &+r\sum_{j=1}^{N}q_{j} \sum_{k=1}^{r}\eta_{t_{c}+k}^{2}|| g_{t_{c}+k}^{j} ||^{2} ] \notag\\
    &=(\lambda-1)[\frac{C_{1}}{D}\frac{1}{ K(1-\gamma)} \sum_{j=1}^{N}q_{j} \sum_{k=1}^{r}\eta_{t_{c}+k}^{2}l_{j}|| g_{t_{c}+k}^{j} ||^{2}+ \notag\\
    &\frac{1}{K(1-\gamma)}\frac{\sigma^{2}}{D}\sum_{j=1}^{N}q_{j}l_{j}\sum_{k=1}^{r}\eta_{t_{c}+k}^{2} + r\sum_{j=1}^{N}q_{j} \sum_{k=1}^{r}\eta_{t_{c}+k}^{2}|| g_{t_{c}+k}^{j} ||^{2} ]\notag\\
    &\overset{\textcircled{1}}{\leq}\frac{\lambda-1}{K(1-\gamma)}\frac{C_{1}}{D} \sum_{j=1}^{N}q_{j}l_{j} \sum_{k=1}^{r}\eta_{t_{c}+k}^{2}|| g_{t_{c}+k}^{j} ||^{2}+\frac{\sigma^{2}}{D}\sum_{j=1}^{N}q_{j}l_{j}\notag\\
    &\frac{\lambda-1}{K(1-\gamma)}\sum_{k=1}^{r}\eta_{t_{c}+k}^{2}+(\lambda-1)E_{l}\sum_{j=1}^{N}q_{j} \sum_{k=1}^{r}\eta_{t_{c}+k}^{2}|| g_{t_{c}+k}^{j} ||^{2},
\end{align}
where $\textcircled{1}$ comes from the fact that $r \leq E_{l}$ in FL training. Then under (\ref{re_weight}), (\ref{w_t5}) is further transformed as
\begin{align}\label{w_t6}
    &\sum_{j=1}^{N} q_{j}|| \bar{w}_{t}-w_{t}^{j} ||^{2}
    \leq \notag\\
    &=\frac{\lambda-1}{K(1-\gamma)}\frac{C_{1}}{D}\sum_{k=1}^{r}\eta_{t_{c}+k}^{2}\sum_{j=1}^{N}q_{j}l_{j}|| g_{t_{c}+k}^{j} ||^{2}+\frac{\sigma^{2}}{D} \notag\\
    &\frac{\lambda-1}{K(1-\gamma)}\sum_{k=1}^{r}\eta_{t_{c}+k}^{2}
    +(\lambda-1)E_{l} \sum_{k=1}^{r}\eta_{t_{c}+k}^{2}\sum_{j=1}^{N} q_{j}|| g_{t_{c}+k}^{j} ||^{2}\notag\\
    &=\frac{\lambda-1}{K(1-\gamma)}\frac{C_{1}}{D}\sum_{k=t_{c}+1}^{t-1}\eta_{k}^{2}\sum_{j=1}^{N}q_{j}l_{j}|| g_{k}^{j} ||^{2}+ \notag\\
    &\frac{\sigma^{2}}{D}\frac{\lambda-1}{K(1-\gamma)}\sum_{k=t_{c}+1}^{t-1}\eta_{k}^{2}
    +(\lambda-1)E_{l} \sum_{k=t_{c}+1}^{t-1}\eta_{k}^{2}\sum_{j=1}^{N} q_{j}|| g_{k}^{j} ||^{2},
\end{align}
where $\{q_{j}\}$ and $\{q_{j}l_{l}\}$ can both be viewed as set of weights with $\sum_{j=1}^{N}q_{j}=1$ and $\sum_{j=1}^{N}q_{j}l_{j}=1$. Then consider metrics of non-i.i.d. data set in Definition \ref{define_noniid}, (\ref{w_t6}) leads to
\begin{align}\label{w_t8}
&\sum_{j=1}^{N} q_{j}|| \bar{w}_{t}-w_{t}^{j} ||^{2}
    \leq \notag\\
&\frac{\lambda-1}{K(1-\gamma)}\frac{C_{1}}{D}\lambda\sum_{k=t_{c}+1}^{t-1}\eta_{k}^{2}||\sum_{j=1}^{N}q_{j}l_{j} g_{k}^{j} ||^{2}+ \notag\\
    &\frac{\sigma^{2}}{D}\frac{\lambda-1}{K(1-\gamma)}\sum_{k=t_{c}+1}^{t-1}\eta_{k}^{2}
    +(\lambda-1)E_{l}\lambda \sum_{k=t_{c}+1}^{t-1}\eta_{k}^{2}||\sum_{j=1}^{N} q_{j} g_{k}^{j} ||^{2}.
\end{align}
Considering the limited diminishing speed of $\eta_{t}$, it is reasonable to have $\eta_{k}^{2}\leq \eta_{t} (t_{c}+1 \leq k \leq t)$. Besides, according to the definition of $t_{c}=\left \lfloor \frac{t}{E_{l}} \right \rfloor E_{l}$, the summation of $k$ can be upper-bounded with a range from $t_{c}+1$ to $t_{c}+E_{l}$. Then, (\ref{w_t8}) can be transformed as follows.
\begin{align}\label{w_t9}
&\sum_{j=1}^{N} q_{j}|| \bar{w}_{t}-w_{t}^{j} ||^{2}
    \leq \notag\\
&\frac{\lambda-1}{K(1-\gamma)}\frac{C_{1}}{D}\eta_{t}\lambda\sum_{k=t_{c}+1}^{t_{c}+E_{l}}||\sum_{j=1}^{N}q_{j}l_{j} g_{k}^{j} ||^{2}+ \notag\\
    &\frac{E_{l}\eta_{t}\sigma^{2}}{D}\frac{\lambda-1}{K(1-\gamma)}
    +(\lambda-1)E_{l}\lambda \eta_{t}\sum_{k=t_{c}+1}^{t_{c}+E_{l}}||\sum_{j=1}^{N} q_{j} g_{k}^{j} ||^{2}.
\end{align}
By combining (\ref{w_t9}) with (\ref{et_cross}), the proof is completed.

\section{Lemma \ref{math_induction}}\label{proof_induction}
\begin{lemma}\label{math_induction}
If $E[f(\bar{w}_{t+1})]-f^{*} \leq \eta_{t}^{2}M + (1-\mu \eta_{t})[E[f(\bar{w}_{t})]-f^{*}]$ holds and $\eta_{t} \propto O(\frac{1}{t})$, then the training loss converges as
\begin{align}\label{conv_bd}
E[f(\bar{w}_{t})]-f^{*} = \frac{1}{t}\textbf{\rm max}\{\frac{4}{\mu^{2}}M \,, 2L\lambda\bigtriangleup_{1}\},
\end{align}
where $\bigtriangleup_{1}$ is the initial loss. $L$ is from Assumption \ref{L_smooth} and $\lambda$ is the non-i.i.d. metric in Definition \ref{define_noniid}.
\end{lemma}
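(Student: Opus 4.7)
The recursion $E[f(\bar{w}_{t+1})]-f^{*} \leq \eta_{t}^{2}M + (1-\mu\eta_{t})(E[f(\bar{w}_{t})]-f^{*})$ is the standard form that yields an $O(1/t)$ convergence rate under a diminishing step size of order $O(1/t)$. Denoting $a_{t} := E[f(\bar{w}_{t})]-f^{*}$, my approach is to choose a concrete instance of $\eta_{t}$, guess a bound of the form $a_{t} \leq v/t$ with $v$ equal to the maximum on the right-hand side of (\ref{conv_bd}), and then verify this by induction on $t$.

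\textbf{Step 1: choose the step size.} I would take $\eta_{t} = \dfrac{2}{\mu t}$, which is a valid instance of $\eta_{t} = O(1/t)$ and is the canonical choice for problems satisfying the $\mu$-PL condition (Assumption \ref{P_L_condition}). Set $v := \max\bigl\{\tfrac{4M}{\mu^{2}},\, 2L\lambda\bigtriangleup_{1}\bigr\}$.

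\textbf{Step 2: base case.} At $t=1$, I have $a_{1} \leq \bigtriangleup_{1}$ by the definition of the initial loss. Using Assumption \ref{L_smooth} to get $L \geq \mu$ (a standard consequence when both L-smoothness and PL hold) together with $\lambda \geq 1$ from Definition \ref{define_noniid}, we obtain $2L\lambda \geq 1$, so $v \geq 2L\lambda\bigtriangleup_{1} \geq \bigtriangleup_{1} \geq a_{1}$, establishing $a_{1} \leq v/1$.

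\textbf{Step 3: inductive step.} Assume $a_{t} \leq v/t$. Plugging $\eta_{t} = 2/(\mu t)$ into the recursion and using the hypothesis,
\begin{align*}
a_{t+1} &\leq \frac{4M}{\mu^{2}t^{2}} + \Bigl(1-\frac{2}{t}\Bigr)\frac{v}{t} = \frac{v}{t} - \frac{2v - 4M/\mu^{2}}{t^{2}}.
\end{align*}
Since $v \geq 4M/\mu^{2}$ we have $2v - 4M/\mu^{2} \geq v$, and therefore
\begin{equation*}
a_{t+1} \leq \frac{v}{t} - \frac{v}{t^{2}} = \frac{v(t-1)}{t^{2}} \leq \frac{v}{t+1},
\end{equation*}
where the last step is equivalent to $(t-1)(t+1) \leq t^{2}$, which holds trivially. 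This closes the induction and yields $a_{t} \leq v/t$ for all $t \geq 1$.

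\textbf{Main obstacle.} The inductive step itself is essentially arithmetic once the constants are fixed; the subtle point is the base case, where the factor $2L\lambda$ in the maximum must be large enough to dominate $\bigtriangleup_{1}$. This is why the bound includes the $2L\lambda\bigtriangleup_{1}$ term rather than just $\bigtriangleup_{1}$: it absorbs the initial error into the same $1/t$ form and removes any delicate dependence on where the induction starts. If one instead tried a tighter initial factor, the induction would fail for small $t$ or force a more elaborate starting index, so the main care is in justifying the choice of $v$ rather than in executing the recursion.
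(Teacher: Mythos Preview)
Your overall plan---fix $\eta_t$ proportional to $1/t$, set $v$ equal to the claimed maximum, and run an induction on $a_t\le v/t$---is the same route the paper takes. The gap is in the inductive step at $t=1$. With $\eta_1=2/\mu$ the coefficient $1-\mu\eta_1=-1$ is negative, so you are not allowed to replace $a_1$ by its upper bound $v$ in the term $(1-2/t)\,a_t$; the inequality goes the wrong way. Concretely, the recursion at $t=1$ only yields $a_2\le 4M/\mu^2-a_1\le 4M/\mu^2\le v$, whereas you need $a_2\le v/2$, and no admissible choice of $v$ closes this (take $\bigtriangleup_1$ tiny so that $v=4M/\mu^2$). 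Your ``Main obstacle'' paragraph therefore misidentifies the issue: the delicate point is not the size of $v$ in the base case but ensuring $1-\mu\eta_t\ge 0$ for every $t$ at which you apply the hypothesis.

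The paper handles this by shifting the step size, taking $\eta_t=\dfrac{2}{\mu(t+\beta)}$ with $\beta+1=2L\lambda$. Then $1-\mu\eta_t=1-\tfrac{2}{t+\beta}\ge 0$ from the outset, the induction goes through with $(t+\beta)$ in place of $t$, and the base case $\bigtriangleup_1\le X/(1+\beta)$ is precisely what forces $X\ge 2L\lambda\,\bigtriangleup_1$. So the factor $2L\lambda$ enters through the shift $\beta$, not through an absolute inequality $2L\lambda\ge 1$; incidentally, your derivation of that inequality from $L\ge\mu$ and $\lambda\ge 1$ is not valid either, since those hypotheses give no absolute lower bound on $L$.
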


\textbf{Proof: }

Assume that the training loss function of FL satisfies the following form.
\begin{align}\label{loss_form}
E[f(\bar{w}_{t+1})]-f^{*} \leq \eta_{t}^{2}M + (1-\mu \eta_{t})[E[f(\bar{w}_{t})]-f^{*}],
\end{align}
where $M$ is short for a closed-form math expression.
We denote $\bigtriangleup_{t}=E[f(\bar{w}_{t})]-f^{*}$, then (\ref{loss_form}) can be simplified as follows.
\begin{equation}\label{conv_1}
  \bigtriangleup_{t+1}\leq (1-\mu \eta_{t})\bigtriangleup_{t} + \eta_{t}^{2}M.
\end{equation}

Generally, the learning rate can be represented as $\eta_{t}=\frac{v}{(t+\beta)^{\alpha}}$, where $v$ and $\beta$ are parameters from the initial $\eta_{0}$ and $\alpha$ represents its diminishing speed. If $\alpha \leq 1$, $\bigtriangleup_{t}$ is proved to be bounded in the form as follows.
\begin{equation}\label{f_conv}
  \bigtriangleup_{t} \leq \frac{X}{(t+\beta)^{\alpha}},
\end{equation}
where $X$ is short for a closed-form expression which will be defined in the following proof.

The proof is supported by mathematical induction method. If $\bigtriangleup_{t} \leq \frac{X}{(t+\beta)^{\alpha}}$ holds, we will prove that $\bigtriangleup_{t+1} \leq \frac{X}{(t+1+\beta)^{\alpha}}$ holds only for $\alpha \leq 1$.
By taking $\eta_{t}=\frac{v}{(t+\beta)^{\alpha}}$ and $\bigtriangleup_{t} \leq \frac{X}{(t+\beta)^{\alpha}}$ into (\ref{conv_1}), we get
  \begin{align}\label{conv_2}
    \bigtriangleup_{t+1} &\leq [1-\frac{\mu v}{(t+\beta)^{\alpha}}]\bigtriangleup_{t}+[\frac{v}{(t+\beta)^{\alpha}}]^{2} M \notag\\
    &\leq [1-\frac{\mu v}{(t+\beta)^{\alpha}}]\frac{X}{(t+\beta)^{\alpha}}+[\frac{v}{(t+\beta)^{\alpha}}]^{2} M \notag\\
    &=\frac{(t+\beta)^{\alpha}-\mu v}{(t+\beta)^{2\alpha}} X + \frac{v^{2}}{(t+\beta)^{2\alpha}} M\notag\\
    &=\frac{(t+\beta)^{\alpha}-1}{(t+\beta)^{2\alpha}} X + [\frac{v^{2}M}{(t+\beta)^{2\alpha}}-\frac{\mu v-1}{(t+\beta)^{2\alpha}}X].
  \end{align}
If parameter $X$ satisfies $X\geq \frac{v^{2}M}{\mu v -1}$, where $\mu v \geq 1$, there is $\frac{v^{2}M}{(t+\beta)^{2\alpha}}-\frac{\mu v-1}{(t+\beta)^{2\alpha}}X \leq 0$. Then, the following bound holds.
\begin{equation}\label{conv3}
  \bigtriangleup_{t+1} \leq \frac{(t+\beta)^{\alpha}-1}{(t+\beta)^{2\alpha}} X.
\end{equation}
Therefore, the problem reduces to proving $\frac{(t+\beta)^{\alpha}-1}{(t+\beta)^{2\alpha}} \leq \frac{1}{(t+1+\beta)^{\alpha}}$. Denote $(t+\beta)=y$, there is $y\geq 1$. Then
\begin{align}\label{prooft_1}
&\frac{(t+\beta)^{\alpha}-1}{(t+\beta)^{2\alpha}} \leq \frac{1}{(t+1+\beta)^{\alpha}}\notag\\
&\Leftrightarrow y^{2\alpha}\geq (y^{\alpha}-1)(y+1)^{\alpha} \notag\\
&\Leftrightarrow 1 \geq (1-\frac{1}{y^{\alpha}})(1+\frac{1}{y})^{\alpha}.
\end{align}

Define function $h(m)=(1-m^{\alpha})(1+m)^{\alpha}$, where $0 \leq m \leq 1$. Proof of (\ref{prooft_1}) corresponds to proving $h(m)\leq 1$. The derivative of $h(m)$ is
\begin{equation}\label{deri_h}
  h^{'}(m)=\alpha(1+m)^{\alpha-1}(1-2m^{\alpha}-m^{\alpha-1}).
\end{equation}
From definition of $y$, $m=\frac{1}{t+\beta}$.
Now let us observe (\ref{deri_h}). Firstly, $h(0)=1$ holds. If $\alpha>1$, $h^{'}(m)$ is definitely positive as $m$ approaches $0$, which means $h(m)\leq 1$ does not hold for $0 \leq m \leq 1$. If $\alpha \leq 1$, $h^{'}(m)$ remains negative for $m \in [0,1]$, which ensures that bound (\ref{prooft_1}) holds for $t\geq 1$. Therefore, it leads to the conclusion that $\bigtriangleup_{t} \leq \frac{X}{(t+\beta)^{\alpha}}$ leads to $\bigtriangleup_{t+1} \leq \frac{X}{(t+1+\beta)^{\alpha}}$ for $\alpha \leq 1$. Considering initial loss, let $\bigtriangleup_{1} \leq \frac{X}{(1+\beta)^{\alpha}}$, parameter $X$ should be
\begin{equation}\label{set_X}
  X=\textbf{\rm max}\left \{ \frac{v^{2}M}{\mu v -1}, (\beta+1)^{\alpha}\bigtriangleup_{1}\ \right \}.
\end{equation}
Set $v=\frac{2}{\mu}$, $\beta+1=2L\lambda$ and $\alpha=1$, it gets to
  \begin{align}\label{final_bound1}
    E[f(\bar{w}_{t})]-f^{*} \leq \frac{1}{t+\beta} \textbf{\rm max}\{\frac{4}{\mu^{2}}M \,, 2L\lambda\bigtriangleup_{1}\}.
  \end{align}
Omitting the unimportant bias $\beta$, the proof is completed with results in Lemma \ref{math_induction}.

\section{Proof of Theorem \ref{estimate_G_epsilon}}\label{proof_G}

Under (\ref{update_w}) and Assumption \ref{L_smooth}, the training loss function satisfies
\begin{equation}\label{basis_loss}
\begin{aligned}
    E[f(\bar{w}_{t+1})-f(\bar{w_{t}})] \leq -\eta_{t}E[<\bigtriangledown f(\bar{w_{t}}), \widetilde{g}_{t}>]+\frac{\eta_{t}^{2}L}{2}E|| \widetilde{g}_{t} ||^{2}.
\end{aligned}
\end{equation}

From Lemma \ref{g_t} in Appendix \ref{proof_g_t}, $E[ || \widetilde{g}_{t} ||^{2}  ]$ is bounded as
\begin{align}
    &E[ || \widetilde{g}_{t} ||^{2}  ] \leq \notag\\
    &\frac{\lambda C_{1}}{DK(1-\gamma)} \sum_{j=1}^{N}|| q_{j}l_{j}g_{t}^{j} ||^{2}
    +\frac{1}{K(1-\gamma)}\frac{\sigma^{2}}{D}+\lambda \sum_{j=1}^{N}|| q_{j}g_{t}^{j} ||^{2}.
\end{align}
From Lemma \ref{cross_bd} in Appendix \ref{proof_cross}, $-\eta_{t}E\left [ <\bigtriangledown f(\bar{w_{t}}), \widetilde{g}_{t}> \right ]$ is upper-bounded as follows.
\begin{align}\label{cross_bound}
    &-\eta_{t}E\left [ <\bigtriangledown f(\bar{w_{t}}), \widetilde{g}_{t}> \right ]\leq
    -\frac{1}{2}|| \bigtriangledown f(\bar{w_{t}}) ||^{2}-\frac{1}{2}|| \sum_{j=1}^{N}q_{j}g_{t}^{j} ||^{2} \notag\\
    &+\eta_{t}^{2}L^{2}\lambda\frac{\lambda-1}{K(1-\gamma)}\frac{C_{1}}{2D}\sum_{k=t_{c}+1}^{t_{c}+E_{l}}||\sum_{j=1}^{N}q_{j}l_{j} g_{k}^{j} ||^{2}+ \notag\\
    &\frac{E_{l}\eta_{t}^{2}L^{2}\sigma^{2}}{2D}\frac{\lambda-1}{K(1-\gamma)}
    +\frac{\lambda-1}{2}E_{l}\lambda L^{2} \eta_{t}^{2}\sum_{k=t_{c}+1}^{t_{c}+E_{l}}||\sum_{j=1}^{N} q_{j} g_{k}^{j} ||^{2}.
\end{align}

By properties of $\mu$-P-L condition in Assumption \ref{P_L_condition}, term $-\frac{1}{2}|| \bigtriangledown f(\bar{w_{t}}) ||^{2}$ in Lemma \ref{cross_bd} can be substituted by $-\mu[f(x)-f^{*}]$.
Then applying Lemma \ref{g_t} and \ref{cross_bd} to (\ref{basis_loss}), it gets to
\begin{align}\label{loss_bound1}
&E[f(\bar{w}_{t+1})]-f^{*} \leq (1-\mu \eta_{t})[E[f(\bar{w}_{t})]-f^{*}]\notag\\
    &-\frac{\eta_{t}}{2}|| \sum_{j=1}^{N}q_{j}g_{t}^{j} ||^{2}+\frac{\eta_{t}^{2}}{2}L\lambda|| \sum_{j=1}^{N}q_{j}g_{t}^{j} ||^{2} +\frac{\lambda-1}{2}E_{l}\lambda L^{2} \eta_{t}^{2}\notag\\ &\sum_{k=t_{c}+1}^{t_{c}+E_{l}}||\sum_{j=1}^{N} q_{j} g_{k}^{j} ||^{2}
    +\frac{\eta_{t}^{2}}{2}L\lambda (\frac{C_{1}}{D K(1-\gamma)})|| \sum_{j=1}^{N}q_{j}l_{j}g_{t}^{j} ||^{2}\notag\\
    &+E_{l}\eta_{t}^{2}L^{2}\lambda\frac{\lambda-1}{K(1-\gamma)}\frac{C_{1}}{2D}\sum_{k=t_{c}+1}^{t_{c}+E_{l}}|| \sum_{j=1}^{N}q_{j}l_{j}g_{k}^{j} ||^{2}\notag\\
    &+\eta_{t}^{2}L^{2}\frac{\lambda-1}{K(1-\gamma)}\frac{E_{l}\sigma^{2}}{2D}
    +\frac{\eta_{t}^{2}L}{2}\frac{1}{K(1-\gamma)}\frac{\sigma^{2}}{D}.
\end{align}
Set the learning rate as $\eta_{t} \leq \frac{1}{L\lambda}$, it leads to $\frac{\eta_{t}^{2}}{2}L\lambda || \sum_{j=1}^{N}q_{j}g_{t}^{j} ||^{2} - \frac{\eta_{t}}{2}|| \sum_{j=1}^{N}q_{j}g_{t}^{j} ||^{2} \leq 0$. Then this term in the above upper-bound can be removed. Therefore, by simple transformation, (\ref{loss_bound1}) is equivalent to
\begin{align}\label{loss_bound2}
&E[f(\bar{w}_{t+1})]-f^{*} \leq (1-\mu \eta_{t})[E[f(\bar{w}_{t})]-f^{*}]+\notag\\
&\eta_{t}^{2}L^{2}\lambda\frac{\lambda-1}{K(1-\gamma)}\frac{1}{2D}\sum_{k=t_{c}+1}^{t_{c}+E_{l}}(C_{1}|| \sum_{j=1}^{N}q_{j}l_{j}g_{k}^{j} ||^{2}+\frac{\sigma^{2}}{\lambda})\notag\\
&+\frac{\eta_{t}^{2}}{2}L\lambda\frac{1}{DK(1-\gamma)} (C_{1}|| \sum_{j=1}^{N}q_{j}l_{j}g_{k}^{j} ||^{2}+\frac{\sigma^{2}}{\lambda})\notag\\
    &+\eta_{t}^{2}L^{2}\lambda \frac{(\lambda-1)E_{l}}{2C_{1}} C_{1}\sum_{k=t_{c}+1}^{t_{c}+E_{l}}|| \sum_{j=1}^{N}q_{j}g_{k}^{j} ||^{2}.
\end{align}
Under (\ref{re_weight}) and Assumption \ref{gradient_bound}, term $(C_{1}|| \sum_{j=1}^{N}q_{j}l_{j}g_{k}^{j} ||^{2}+\frac{\sigma^{2}}{\lambda})$ is upper-bounded by $G^{2}$. By definitions in (\ref{phi0}), it has $C_{1}|| \sum_{j=1}^{N}q_{j}g_{k}^{j} ||^{2} \leq \frac{G^{2}}{\phi_{0}}$. Then it leads to
\begin{align}\label{loss_bound3}
&E[f(\bar{w}_{t+1})]-f^{*} \leq (1-\mu \eta_{t})[E[f(\bar{w}_{t})]-f^{*}] \notag\\
    &+\eta_{t}^{2}E_{l}G^{2}L^{2}\lambda[\frac{1}{2LKD(1-\gamma)}\frac{1}{E_{l}}+\frac{\lambda-1}{K(1-\gamma)}\frac{1}{2D}]\notag\\
    &+\eta_{t}^{2}E_{l}G^{2}L^{2}\lambda \frac{(\lambda-1)E_{l}}{2C_{1}\phi_{0}}.
\end{align}
Based on (\ref{loss_bound3}), a shorthand notation is defined as $M=E_{l}G^{2}L^{2}\lambda[\frac{(\lambda-1)E_{l}}{2C_{1}\phi_{0}}+\frac{1}{2LKD(1-\gamma)}\frac{1}{E_{l}}+\frac{\lambda-1}{K(1-\gamma)})\frac{1}{2D}]$, (\ref{loss_bound3}) comes to
\begin{align}\label{short_bound}
E[f(\bar{w}_{t+1})]-f^{*} \leq (1-\mu \eta_{t})[E[f(\bar{w}_{t})]-f^{*}]+\eta_{t}^{2}M.
\end{align}
Then from Lemma \ref{math_induction} in Appendix \ref{proof_induction}, the training loss converges as
\begin{align}\label{conv_bd}
E[f(\bar{w}_{t})]-f^{*} = \frac{1}{t}\textbf{\rm max}\{\frac{4}{\mu^{2}}M \,, 2L\lambda\bigtriangleup_{1}\}.
\end{align}
In (\ref{conv_bd}), $2L\lambda\bigtriangleup_{1}$ represents the initial training loss.
By properties of L-smooth, $\bigtriangleup_{1} \leq \frac{L}{2}|| w_{0}-w^{*} ||^{2}$. By $\mu$-strongly convexity, $\mu||w_{0}-w^{*}|| \leq ||\bigtriangledown f(w_{0})-\bigtriangledown f(w^{*})||$. Since $\bigtriangledown f(w^{*})=0$, it leads to $||w_{0}-w^{*}||^{2}\leq \frac{1}{\mu^{2}}\left \| \bigtriangledown f(w_{0}) \right \|^{2}$. Suppose $\left \| \bigtriangledown f(w_{0}) \right \|^{2} \leq f_{0}G^{2}$, then $\left \| w_{0}-w^{*} \right \|^{2} \leq \frac{f_{0}}{\mu^{2}}G^{2}$. Therefore, $2L\lambda \bigtriangleup_{1} \leq \frac{L^{2}f_{0}\lambda G^{2}}{\mu^{2}}$. Taking in $M=E_{l}G^{2}L^{2}\lambda[\frac{(\lambda-1)E_{l}}{2C_{1}\phi_{0}}+\frac{1}{2LKD(1-\gamma)}\frac{1}{E_{l}}+\frac{\lambda-1}{K(1-\gamma)}\frac{1}{2D}]$
together with Lemma \ref{math_induction} and (\ref{short_bound}), the training loss converges as
  \begin{align}\label{final_bound2}
    &E[f(\bar{w}_{t})]-f^{*} \leq \notag\\ &\frac{1}{t} \textbf{\rm max}\{\frac{4}{\mu^{2}}L^{2}\lambda E_{l} G^{2}[\frac{(\lambda-1)E_{l}}{2C_{1}\phi_{0}}+\frac{1}{2LKD(1-\gamma)}\frac{1}{E_{l}}\notag\\ &+\frac{\lambda-1}{K(1-\gamma)}\frac{1}{2D}]\,, \frac{L^{2}G^{2}\lambda f_{0}}{\mu^{2}}\} \notag\\
    &=\frac{1}{t}\frac{4L^{2}G^{2}\lambda}{\mu^{2}} \textbf{\rm max}\{ E_{l}[\frac{(\lambda-1)E_{l}}{2C_{1}\phi_{0}}+\frac{1}{2LKD(1-\gamma)}\frac{1}{E_{l}}\notag\\
    &+\frac{\lambda-1}{K(1-\gamma)}\frac{1}{2D}]\,, \frac{1}{4}f_{0} \}.
  \end{align}
Term $\frac{1}{4}f_{0}$ represents error from initial loss.
Adding the two elements together, the loss function converges as follows.
\begin{align}\label{training_loss_es}
&E[f(\bar{w}_{t+1})]-f^{*} \leq \frac{1}{t}\frac{4E_{l}L^{2}G^{2}\lambda}{\mu^{2}} \notag\\ &[\frac{(\lambda-1)E_{l}}{2C_{1}\phi_{0}}+\frac{1}{2LKD(1-\gamma)}\frac{1}{E_{l}}+\frac{1}{4}\frac{f_{0}}{E_{l}}+\frac{\lambda-1}{K(1-\gamma)}\frac{1}{2D}].
\end{align}

Note that $\frac{1}{2LKD(1-\gamma)}\frac{1}{E_{l}}$ is relatively small compared with $\frac{1}{4}\frac{f_{0}}{E_{l}}$, $\frac{(\lambda-1)E_{l}}{2C_{1}\phi_{0}}$ and $\frac{\lambda-1}{K(1-\gamma)}\frac{1}{2D}$. Therefore, it can just be omitted for simplicity, which will not cause large effects to the tendency of training parameters. Then we have
\begin{align}\label{training_loss_bound}
E[f&(\bar{w}_{t+1})]-f^{*}\notag\\& \leq \frac{1}{t}\frac{4E_{l}L^{2}G^{2}\lambda}{\mu^{2}} [\frac{(\lambda-1)E_{l}}{2C_{1}\phi_{0}}+\frac{1}{4}\frac{f_{0}}{E_{l}}+\frac{\lambda-1}{K(1-\gamma)}\frac{1}{2D}].
\end{align}

Define $t_{\epsilon}$ as the time slot where $E[f(\bar{w}_{t_{\epsilon}+1})]-f^{*}$ reaches $\epsilon$, the corresponding global training epoch $G_{\epsilon}=\frac{t_{\epsilon}}{E_{l}}$. From (\ref{training_loss_bound}), it is straightforward to get
\begin{align}\label{conv_global}
G_{\epsilon}= \frac{1}{\epsilon}\frac{4L^{2}G^{2}\lambda}{\mu^{2}}[ \frac{\lambda-1}{K(1-\gamma)}\frac{1}{2D} +\frac{(\lambda-1)E_{l}}{2C_{1}\phi_{0}}+\frac{1}{4}\frac{f_{0}}{E_{l}} ].
\end{align}

Here the proof is completed.

\section{Proof of Theorem \ref{K_result}}\label{proof_K_result}
Firstly observing results in Theorem \ref{estimate_G_epsilon}, the optimal local epoch is $E_{l}^{*}=\sqrt{\frac{C_{1}\phi_{0}f_{0}}{2(\lambda-1)}}$. Putting $E_{l}^{*}$ in to (\ref{G_theorem}), an estimation of $G_{\epsilon}$ can be obtained as follows.
\begin{align}\label{estimate_G1}
G_{\epsilon}=O(\sqrt{\frac{(\lambda-1)f_{0}}{8C_{1}\phi_{0}}}+\frac{\lambda-1}{2D(1-\gamma)}\frac{1}{K}).
\end{align}

From (\ref{u_Cu}) and (\ref{u_Cn}), cost in one global epoch is
\begin{align}\label{C_g}
C_{g}&=E_{l}(C_{n,0}+T_{d}/E_{l})+KC_{u,0}.
\end{align}
Combining (\ref{estimate_G1}) and (\ref{C_g}), the overall training cost can be obtained as
\begin{align}\label{global_cost}
&C_{g}G_{\epsilon}=E_{l}(C_{n,0}+T_{d}/E_{l})\sqrt{\frac{(\lambda-1)f_{0}}{8C_{1}\phi_{0}}}+\frac{C_{u,0}}{1-\gamma}\frac{\lambda-1}{2D}\notag\\
&+KC_{u,0}\sqrt{\frac{(\lambda-1)f_{0}}{8C_{1}\phi_{0}}}+\frac{\lambda-1}{2D(1-\gamma)}\frac{E_{l}(C_{n,0}+T_{d}/E_{l})}{K}.
\end{align}
Therefore, the optimal $K^{*}$ to minimize $C_{g}G_{\epsilon}$ is
\begin{align}\label{proof_K}
K^{*}=\sqrt[4]{\frac{2(\lambda-1)C_{1}\phi_{0}}{f_{0}}} \sqrt{\frac{E_{l}}{D(1-\gamma)}} \sqrt{\frac{C_{n,0}+T_{d}/E_{l}}{C_{u,0}}}.
\end{align}
Here the proof is completed.

\section{Proof of Theorem \ref{t_aj}}\label{proof_t_aj}
From KKT conditions of the Lagrange function (\ref{L_function}), the following equations can be derived.
\begin{equation}\label{kkta}
\frac{\partial L}{\partial a_{j}}=R-\frac{z_{m}\beta_{j}}{Br_{j}^{0}}\frac{1}{a_{j}^{2}}=0,
\end{equation}
\begin{equation}\label{kktf}
\frac{\partial L}{\partial f_{j}}=\frac{2l_{0}}{K}E_{l}\alpha_{0}\kappa_{j}D_{j}f_{j}-\frac{\beta_{j}E_{l}\alpha_{0}D_{j}}{f_{j}^{2}}=0,
\end{equation}
\begin{equation}\label{kktr}
R(\underset{j\in P_{t}}{\sum}a_{j}-1)=0,
\end{equation}
From (\ref{kktf}) and (\ref{kkta}), we get
\begin{equation}\label{kkt1}
R=\frac{\beta_{j}z_{m}}{Br_{j}^{0}}\frac{1}{a_{j}^{2}},
\end{equation}
\begin{equation}\label{kkt2}
\beta_{j}=\frac{2l_{0}}{K}\kappa_{j}f_{j}^{3}.
\end{equation}
Combined with $\underset{j\in P_{t}}{\sum}a_{j}=1$ from (\ref{kktr}), (\ref{kkt1}) leads to
\begin{equation}
\underset{j\in P_{t}}{\sum} \sqrt{\frac{\beta_{j}z_{m}}{Br_{j}^{0}}}=\sqrt{R}=\sqrt{\frac{\beta_{j}z_{m}}{Br_{j}^{0}}}/a_{j}.
\end{equation}
Taking in (\ref{kkt2}), the results in Theorem \ref{t_aj} can be derived.

\ifCLASSOPTIONcaptionsoff
  \newpage
\fi

\end{document}